\newtheorem{theorem}{Theorem}
\title{LISR: Learning Linear 3D Implicit Surface Representation Using Compactly Supported Radial Basis Functions}
\author{
	Atharva Pandey,
	Vishal Yadav,
	Rajendra Nagar,
	Santanu Chaudhury\\
}
\begin{document}
	
	\maketitle
	
	\begin{abstract}
		Implicit 3D surface reconstruction of an object from its partial and noisy 3D point cloud scan is the classical geometry processing and 3D computer vision problem.  In the literature,
		various 3D shape representations have been developed, differing in memory efficiency and shape retrieval effectiveness, such as volumetric, parametric, and implicit surfaces. Radial basis functions provide memory-efficient parameterization of
		the implicit surface. However, we show that training a neural network using the mean squared error between the ground-truth implicit surface and the linear basis-based implicit surfaces does not converge to the global solution. In this work, we propose locally supported compact radial basis functions for a linear representation of the implicit surface. This representation enables us to generate  3D shapes with arbitrary topologies at any resolution due to their continuous nature. We then propose a neural network architecture for learning the linear implicit shape representation of the 3D surface of an object. We learn linear implicit shapes within a supervised learning framework using ground truth Signed-Distance Field (SDF) data for guidance. The classical strategies face difficulties in finding linear implicit shapes from a given 3D point cloud due to numerical issues (requires solving inverse of a large matrix) in basis and query point selection. The proposed approach achieves better Chamfer distance and comparable F-score than the state-of-the-art approach on the benchmark dataset. We also show the effectiveness of the proposed approach by using it for the 3D shape completion task. 
	\end{abstract}
	
	\section{Introduction}
	3D surface reconstruction of real-world objects is a fundamental research problem in 3D Computer Vision, Geometry Processing, and Computer Vision with multiple applications such as Gaming, Animation, AR/VR, simulation, etc  \cite{berger2017survey} \cite{huang2022surface}. There exist various variants of this problem like 3D Reconstruction from single view images, multiple view images, single view/multiple view partial 3D scans, etc. \cite{berger2017survey,huang2022surface,xiu2022icon,tiwari2021neural}. The 3D Reconstruction problem can further be classified based on the representation of the 3D shape, differing in their memory efficiency and the effectiveness of shape retrieval. The representations can be broadly classified into four categories: point clouds, mesh-based representations, voxel-based representations, and implicit shape representations \cite{berger2017survey}. \\
	Implicit shape representation’s primary superiority lies in its ability to generate arbitrary shapes and topologies at any resolution due to their continuous nature \cite{macedo2011hermite}. They are modeled using parameterized functions, and learning the shape involves predicting these parameters \cite{park2019deepsdf}. Parameterization of the implicit representation defines the efficiency of the implicit surfaces. There exist works in the literature that aim to parameterize implicit surfaces \cite{park2019deepsdf,macedo2011hermite,yavartanoo20213dias}. One category of works aims at modeling the implicit representation by a function parameterized by the neural networks known as neural implicit \cite{park2019deepsdf,saito2019pifu,michalkiewicz2019implicit,xu2022hrbf}. Yavartanoo \emph{et al.} represents the implicit surface by a finite low-degree algebraic polynomial and learns the co-efficient of the algebraic polynomial \cite{yavartanoo20213dias}. This suffers from the fact that the low-degree polynomials fail to model the high curvature region of the surface. The radial basis functions have been used classically to model complex implicit surfaces \cite{macedo2011hermite,liu2016closed,xu2022hrbf}. 
	
	In this work, we propose a radial basis function-based representation \cite{liu2016closed,macedo2011hermite} to model implicit representation and learn the parameter of this linear representation using a neural network. The proposed Linear implicit shape model has significantly less number of parameters when compared to neural implicit \cite{park2019deepsdf} and algebraic implicit representation \cite{yavartanoo20213dias} and is a great choice for a task requiring memory efficiency due to less number of parameters. We learn linear implicit surface representation (LISR) from a partial point cloud using a based learning approach which is supervised by the ground truth Signed-Distance Field (SDF) of the underlying shape. In the methodology section, we show that directly using the RBF framework for training a neural network to predict the SDF does not work. For the convergence to the optimal solution,  We identify a constraint on the choice of basis and the query points for SDF evaluation to find the loss, which facilitates the learning of linear implicit surfaces. We show that the compactly-supported RBF as basis functions, along with a strategical choice of query points at the time of training, satisfy the desired constraint, thereby converging to the optimal solution. We run an experiment to show the validity of the constraint and train a neural network with the given constraint on the task of shape prediction and completion from a raw point cloud. We use the proposed approach for the shape completion task to show the application of the proposed approach. 
	
	In summary, our contributions are as follows:
	\begin{itemize}
		\item We parameterize the implicit 3D surface of an object using linear and compactly supported radial basis functions and identify a constraint on the choice of basis and query points for enabling efficient learning of linear implicit shapes.
		\item We show that the proposed learning-based Compactly Supported-RBF (CSRBF) representation along with the proposed query point selection strategy satisfies the constraint on the choice of the basis functions.
		\item We propose a neural network to learn the coefficient of linearity for CSRBF-based implicit surface representation. 
		\item We experimentally validate that the proposed approach can learn high-quality shape representations over distribution of shapes and show its applicability for the shape completion task.
	\end{itemize}
	The code for this paper is available at \url{https://github.com/Atharvap14/LISR}.
	\section{Related Works}
	\textbf{Volumetric Representation}: Volumetric representation models the surface surface using voxels in a unit cube. The learning-based approaches learn voxel occupancy from the given input point cloud/image of an object \cite{wu20153d,wu2016learning,peng2020convolutional,zhang20223dilg,wang2023alto,chibane2020implicit}. The accuracy of the reconstructed surfaces depends on the voxel size which is a cubic memory requirements in terms of the levels in the tree-based representation \cite{dai2017shape,riegler2017octnet}. However, volumetric representation-based methods still suffer in terms of huge memory requirements to represent high curvature regions of the surface. 
	
	\textbf{Neural Implicit Representation}: Recently, many algorithms were proposed to predict the signed distance field and the occupancy from a single point cloud using a neural network \cite{park2019deepsdf,erler2020points2surf,michalkiewicz2019deep,michalkiewicz2019implicit,mescheder2019occupancy,niemeyer2020differentiable,chabra2020deep,liu2021deep,venkatesh2021deep,chen2019learning}. These approaches use the continuous representation of the implicit surface, which they parameterize using the neural networks as compared to the volumetric representation, where the surface is represented by voxels by discretizing the 3D unit cube. There exist approaches that reconstruct the implicit surface from the single image of the object \cite{liu2019learning,sitzmann2019scene,yavartanoo20213dias,yu2022monosdf,saito2019pifu,xu2019disn}. Recently, many methods have decomposed the 3D surfaces into multiple primitives and learned implicit representation for each of the primitives from the local point cloud patches \cite{genova2019learning,genova2020local,deng2020nasa,jiang2020local,darmon2022improving,wu2022object,deng2020cvxnet,nan2017polyfit}.   These approaches do not solve the point completion problem and rely on a clean and complete point cloud of the object, and may not generalize well to unseen categories as they are trained in the global sense.    
	
	\textbf{Parametric Implicit Surfaces }: Another categories of algorithms parameterize the implicit surface representation and find optimal parameters either using classical optimization approaches \cite{huang2019variational,blane20003l,rouhani2010relaxing,macedo2011hermite,zhao2021progressive} or using learning-based approaches \cite{yavartanoo20213dias,xu2022hrbf}. Yavartanoo \emph{et al.} represents the implicit surface by a finite low-degree algebraic polynomial and learns the co-efficient of the algebraic polynomial \cite{yavartanoo20213dias}. This approach suffers from the fact that the low-degree polynomials fail to model the high curvature region of the surface. The radial basis functions have been used classically to model complex implicit surfaces \cite{macedo2011hermite,liu2016closed,xu2022hrbf} however have not been utilized for surface reconstruction in a learning-based framework. In this work, we propose an RBF-based representation to model implicit representation and learn the parameters of this linear representation using a neural network. The proposed Linear implicit shape models have significantly less number of parameters when compared to neural implicit \cite{park2019deepsdf} and algebraic implicit representation \cite{yavartanoo20213dias}.
	
	\textbf{Implicit Surfaces and Shape Completion}: 
	Recently, learning-based approaches have been proposed to predict the full 3D object
	from its partial 3D point cloud scan. The 3D surface
	completion algorithms differ based on the representation of the 3D objects, e.g.   triangle meshes, volumetric, 3D point clouds, and 3D implicit surface 
	functions. The approaches \cite{choy20163d,dai2017scannet,dai2017shape,hane2017hierarchical,sun2022patchrd} predict the completed 3D object in terms of its volumetric representation. However, These algorithms are computationally expensive for high-resolution volumetric point cloud completion. The triangle mesh-based 3D shape completion methods fail in capturing complex geometric structures \cite{litany2018deformable}. Implicit surfaces are well known for their interpolation property even in case of incomplete point clouds \cite{keren1994describing,turk2002modelling}. We use this property to solve the shape completion problem using implicit surface reconstruction. There exist various works \cite{yuan2018pcn,yan2022fbnet,yu2021pointr,xiang2022snowflake,mittal2022autosdf,yew2022regtr,yan2022shapeformer} for shape completion task, but there is no work to solve these two problems simultaneously. The algorithm proposed in \cite{chibane2020implicit} addresses this problem but mostly for human 3D shapes and also uses an occupancy network to first learn the implicit representation and subsequently solve the completion problem.

	\section{Methodology}
	In this work, we aim to solve the problem of implicit 3D surface reconstruction of an object $\mathcal{S}$ from a scene from its partial and noisy 3D point cloud scan. Let $\mathcal{M}=\{\mathbf{x}_1,\ldots,\mathbf{x}_n\}\subset\mathcal{S}$ be a 3D point cloud scan of an object. Given $\mathcal{M}$, our goal is to find an implicit representation of the 3D surface of the underlying object represented by $\mathcal{M}$. We represent the implicit surface of an object by the signed distance field $f:\mathbb{R}^3\rightarrow \mathbb{R}$. The value $f(\mathbf{x})$ represents the distance of the point $\mathbf{x}\in\mathbb{R}^3$ from the surface of the object $\mathcal{M}$. The zero level set of the SDF $f$ represents the surface of the underlying object $\mathcal{M}$, i.e., $\mathcal{S}=\{\mathbf{x}\in\mathbb{R}^{3}\mid f(\mathbf{x})=0\}$. In this work, we propose a Linear Implicit Surface Representation (LISR) of $f$ which we discuss in the following Sections.
	\subsection{Linear Implicit Surface Representation}
	In the proposed Linear Implicit Shape Representation (LISR) framework, we represent the surface of a 3D object as a zero-level set of a scalar field modeled using a linear combination of basis functions. The LISR representation is defined by a set of coefficients and a set of basis functions which is defined in Equation \eqref{eq1}.
	\begin{equation}
		f(\mathbf{x}) = \sum_{i=1}^{n} \alpha_i \phi_i(\mathbf{x}) 
		\label{eq1}
	\end{equation}
	Here, $f(\mathbf{x})$ represents the implicit function value at point $\mathbf{x}$, $\alpha_i$'s are the coefficients of linear combinations, and the functions $\phi_i$  are the basis functions evaluated at $\mathbf{x}$. Given a point cloud $\mathcal{M}$ with its ground truth signed distance field $\mathbf{s}_{\text{gt}}(\mathbf{x})$, we formulate an optimization problem where we minimize the L2 loss between the predicted SDF and the ground truth SDF. Formally, we define the optimization problem in Equation \eqref{eq2}.
	\begin{equation}
		\arg\min_{\boldsymbol{\alpha}} \sum_{\mathbf{x} \in \mathcal{Q}} \| f(\mathbf{x}) - s_{\text{gt}}(\mathbf{x}) \|_2^2
		\label{eq2}
	\end{equation}
	Here, $\boldsymbol{\alpha}=\begin{bmatrix}\alpha_1&\alpha_2&\cdots&\alpha_n\end{bmatrix}^\top\in\mathbb{R}^n$ contains the coefficients to be optimized, $\mathcal{Q}$  represents the domain of the optimization problem, and $\|\cdot\|_2$ denotes the L2 norm. It is easy to observe that the optimization problem defined in Equation \eqref{eq2} can be rephrased as a solution to a linear system $\mathbf{V}^\top\boldsymbol{\alpha} = \mathbf{s_{\text{gt}}}$ using the condition $f(\mathbf{x})=\mathbf{s}_{gt}(\mathbf{x}),\;\forall \mathbf{x}\in\mathcal{Q}$. Here, $\mathbf{s_{\text{gt}}}$ is the vector containing ground truth SDF values for query points $ \{\mathbf{x}_1, \mathbf{x}_2, \ldots, \mathbf{x}_m\}$  in the domain $\mathcal{Q}$, $\mathbf{V}^\top\in\mathbb{R}^{m\times n}$ is the matrix of basis functions evaluated at query points ($n$ is the number of basis functions, and $m$ is the number of query points) and defined as  $\mathbf{V}^\top = \begin{bmatrix} \phi_1(\mathbf{x}_1) & \cdots & \phi_n(\mathbf{x}_1) \\ \vdots  & \ddots & \vdots \\\phi_1(\mathbf{x}_m) & \cdots & \phi_n(\mathbf{x}_m)\end{bmatrix}$.  We first determine the condition where we can determine the optimal solution to this linear system which we state in the below result. Let $\boldsymbol{\alpha}^*$ represent the solution obtained by solving the optimization problem defined in Equation \eqref{eq1} (We use the gradient descent algorithm). 
	\begin{theorem}
		\label{th_1}
		The solution $\boldsymbol{\alpha}^*$ will be equal to the optimal solution of the linear system $\mathbf{V}^\top\boldsymbol{\alpha} = \mathbf{s_{\text{gt}}}$ if and only if $\mathbf{VV}^\top$ is a full rank matrix.
	\end{theorem}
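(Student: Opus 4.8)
The plan is to treat the optimization problem in \eqref{eq2} as the unconstrained least-squares problem $\min_{\boldsymbol{\alpha}}g(\boldsymbol{\alpha})$ with $g(\boldsymbol{\alpha})=\|\mathbf{V}^\top\boldsymbol{\alpha}-\mathbf{s_{\text{gt}}}\|_2^2$, and to exploit its convexity. First I would record the elementary facts that $g$ is a convex quadratic with gradient $\nabla g(\boldsymbol{\alpha})=2\mathbf{V}\mathbf{V}^\top\boldsymbol{\alpha}-2\mathbf{V}\mathbf{s_{\text{gt}}}$ and constant Hessian $2\mathbf{V}\mathbf{V}^\top\succeq\mathbf{0}$; that the set of global minimizers of $g$ coincides with the always-nonempty solution set of the normal equations $\mathbf{V}\mathbf{V}^\top\boldsymbol{\alpha}=\mathbf{V}\mathbf{s_{\text{gt}}}$; and that $\operatorname{rank}(\mathbf{V}\mathbf{V}^\top)=\operatorname{rank}(\mathbf{V}^\top)=n-\dim\ker(\mathbf{V}^\top)$, so $\mathbf{V}\mathbf{V}^\top$ is full rank precisely when $\mathbf{V}^\top$ has trivial kernel. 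I would also fix the convention that ``the optimal solution of $\mathbf{V}^\top\boldsymbol{\alpha}=\mathbf{s_{\text{gt}}}$'' means a least-squares solution, i.e.\ the exact solution whenever the system is consistent.

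For the ``if'' direction, suppose $\mathbf{V}\mathbf{V}^\top$ is full rank. Then it is positive definite, so $g$ is strongly convex and has the unique minimizer $\boldsymbol{\alpha}_{\text{opt}}=(\mathbf{V}\mathbf{V}^\top)^{-1}\mathbf{V}\mathbf{s_{\text{gt}}}$, which is exactly the least-squares solution of the linear system (and the unique exact solution when it is consistent). Because $g$ is smooth and strongly convex, gradient descent with a sufficiently small constant step size converges (linearly) to this unique minimizer, hence $\boldsymbol{\alpha}^*=\boldsymbol{\alpha}_{\text{opt}}$.

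For the ``only if'' direction I would argue the contrapositive. If $\mathbf{V}\mathbf{V}^\top$ is rank-deficient, then $\mathcal{N}:=\ker(\mathbf{V}^\top)\neq\{\mathbf{0}\}$ and the minimizer set of $g$ is the positive-dimensional affine subspace $\boldsymbol{\alpha}_0+\mathcal{N}$, so no optimal solution of the system is singled out. Decomposing the gradient-descent recursion along $\mathcal{N}$ and $\mathcal{N}^{\perp}=\operatorname{col}(\mathbf{V})$ shows that the component of the iterate in $\mathcal{N}$ is frozen at its initial value while the component in $\mathcal{N}^{\perp}$ contracts to the minimum-norm least-squares solution; therefore the limit equals $P_{\mathcal{N}}\boldsymbol{\alpha}^{(0)}+\boldsymbol{\alpha}_{\text{min-norm}}$ and depends on the initialization $\boldsymbol{\alpha}^{(0)}$. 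Thus $\boldsymbol{\alpha}^*$ need not coincide with a prescribed optimal solution --- for a generic $\boldsymbol{\alpha}^{(0)}$ it differs, e.g.\ from the minimum-norm one --- which establishes the equivalence.

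The hard part will be the ``only if'' direction, together with pinning down what the statement should mean: when $\mathbf{V}\mathbf{V}^\top$ is singular, ``the optimal solution'' is ambiguous, and the genuine content is that the least-squares solution is then non-unique and gradient descent returns an initialization-dependent point rather than a canonical one. A secondary, routine issue is making the gradient-descent convergence quantitative (a standard step-size bound in terms of $\lambda_{\max}(\mathbf{V}\mathbf{V}^\top)$, and the eigenbasis decomposition used in the rank-deficient case); neither is deep, but both should be stated rather than taken for granted.
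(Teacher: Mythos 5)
Your proposal is correct and follows essentially the same route as the paper: both arguments rest on the observation that gradient-descent iterates are confined to the affine set $\boldsymbol{\alpha}_0+\operatorname{range}(\mathbf{V}\mathbf{V}^\top)$ (your $\ker(\mathbf{V}^\top)\oplus\operatorname{col}(\mathbf{V})$ decomposition is just the orthogonal-complement view of the paper's parametrization $\boldsymbol{\alpha}_0+\mathbf{V}\mathbf{V}^\top\boldsymbol{\gamma}$), so reaching the designated solution for arbitrary initialization forces $\mathbf{V}\mathbf{V}^\top$ to be full rank. Your write-up is in fact slightly cleaner, since you state the gradient as $2\mathbf{V}\mathbf{V}^\top\boldsymbol{\alpha}-2\mathbf{V}\mathbf{s_{\text{gt}}}$ rather than the paper's form involving $(\mathbf{V}\mathbf{V}^\top)^{-1}$, which presupposes the invertibility being proven, and you explicitly flag the ambiguity of ``the optimal solution'' in the rank-deficient case.
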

	\begin{proof}
		To prove this result, we first show that the gradient of the SDF loss function concerning $\boldsymbol{\alpha}$ is equal $\mathbf{VV}^\top(\alpha - (\mathbf{VV}^\top)^{-1}\mathbf{V}\mathbf{s}_{gt})$. Consider the the SDF Loss function  $\ell(\boldsymbol{\alpha}) = \sum_{\mathbf{x} \in \mathcal{Q}} \| f(\mathbf{x}) - \mathbf{s}_{\text{gt}}(\mathbf{x}) \|_2^2$ as defined in Equation \eqref{eq1}. Here, $f(\mathbf{x})=\mathbf{V}^\top(x)\boldsymbol{\alpha}$ and $\mathbf{V}(\mathbf{x})=\begin{bmatrix}\phi_1(\mathbf{x})&\phi_2(\mathbf{x})&\cdots&\phi_n(\mathbf{x})\end{bmatrix}^\top$ is the vector of values of basis functions evaluated at query point $\mathbf{x}\in\mathcal{Q}$. Now, using the chain rule, we can easily show that the gradient of the SDF loss with respect to $\boldsymbol{\alpha}$ can be defined as:
		\begin{eqnarray}
			\nabla_{\boldsymbol{\alpha}}\ell&=&2 \sum_{\mathbf{x} \in \mathcal{Q}} (\mathbf{V}^\top(x)\boldsymbol{\alpha} - \mathbf{s}_{\text{gt}}(\mathbf{x}))\nabla_{\boldsymbol{\alpha}} (\mathbf{V}^\top(x)\boldsymbol{\alpha})\\
			&=&2\mathbf{VV}^\top\left(\boldsymbol{\alpha} - (\mathbf{VV}^\top)^{-1}\mathbf{V}\mathbf{s}_{\text{gt}}\right)
		\end{eqnarray}
		We observe that $\boldsymbol{\alpha}^* = (\mathbf{VV}^\top)^{-1}\mathbf{V}\mathbf{s}_{\text{gt}}$ represents the optimal solution for the linear system $ \mathbf{V}^\top\alpha = \mathbf{s}_{\text{gt}} $.  Hence, the optimal solution of the gradient descent algorithm will be  $\boldsymbol{\alpha}_{0}+\mathbf{VV}^\top\boldsymbol{\gamma}^*$. Here, $\boldsymbol{\gamma}^*=\underset{\boldsymbol{\gamma}}{\arg\min}\|\boldsymbol{\alpha}_{0}+\mathbf{VV}^\top\boldsymbol{\gamma}-\boldsymbol{\alpha}^*\|^2_2$. Here, $\boldsymbol{\alpha}_0$ is the initialized values and $\boldsymbol{\alpha}_{0}+\mathbf{VV}^\top\boldsymbol{\gamma}^*$ represents that optimal $\boldsymbol{\alpha}$ lies in the subspace of $\mathbf{VV}^\top$ centered at $\boldsymbol{\alpha}_0$. We can easily deduce (using least square solution) that the optimal $\boldsymbol{\gamma}^*$ is the solution to a linear system $\mathbf{VV}^\top\boldsymbol{\gamma}= \left( \boldsymbol{\alpha}^*-\boldsymbol{\alpha}_{0}\right)$. Unique value of $\boldsymbol{\gamma}$ exists if and only if $\mathbf{VV}^\top$ is a full-rank matrix and hence the gradient descent algorithm will be able to converge if and only if $\mathbf{VV}^\top$ is a full-rank matrix.
	\end{proof}
	\begin{figure}[!h]
		\centering
		\stackunder{\epsfig{figure=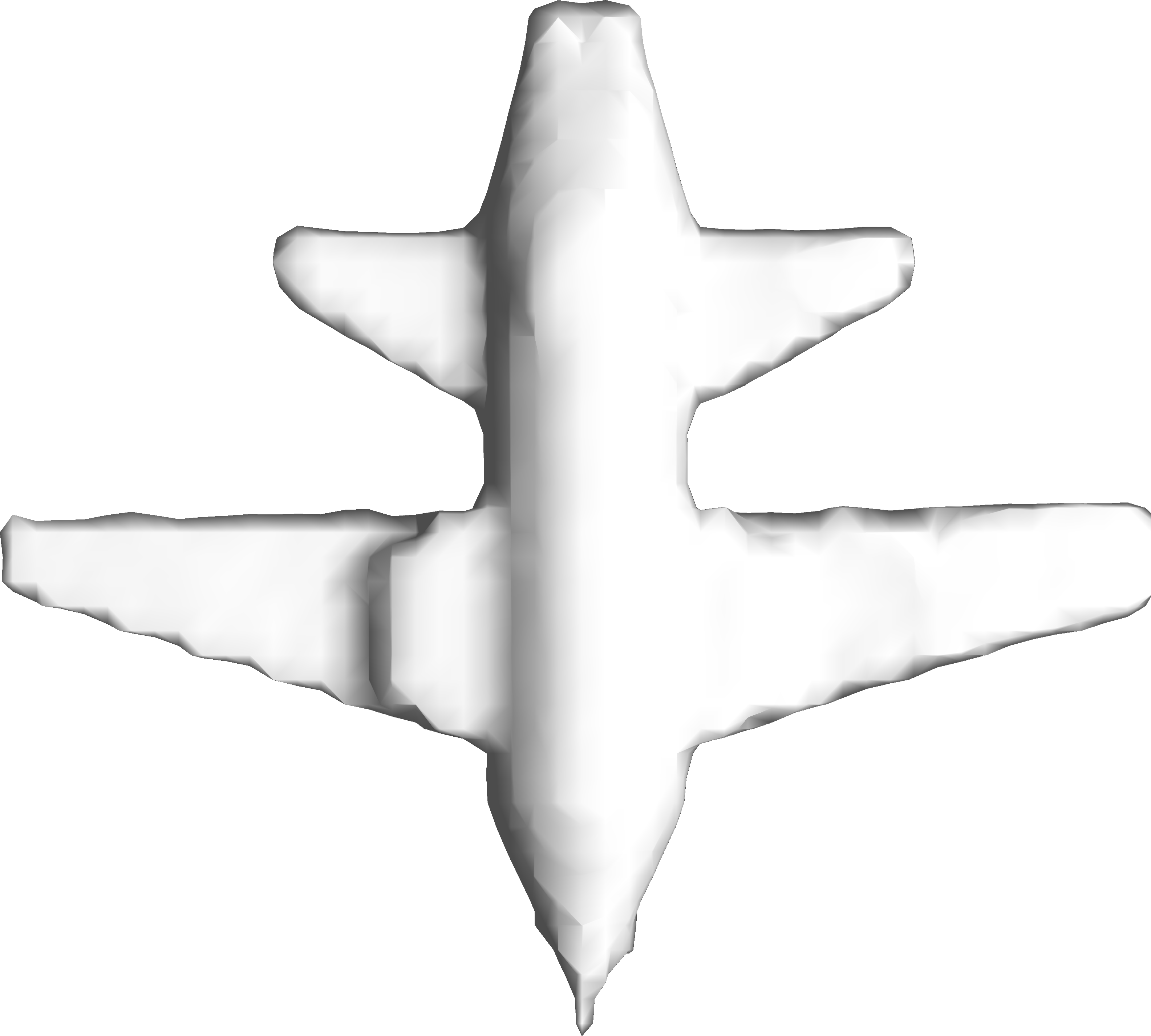,width=0.27\linewidth}}{$\eta=10^{-6}$}
		\stackunder{\epsfig{figure=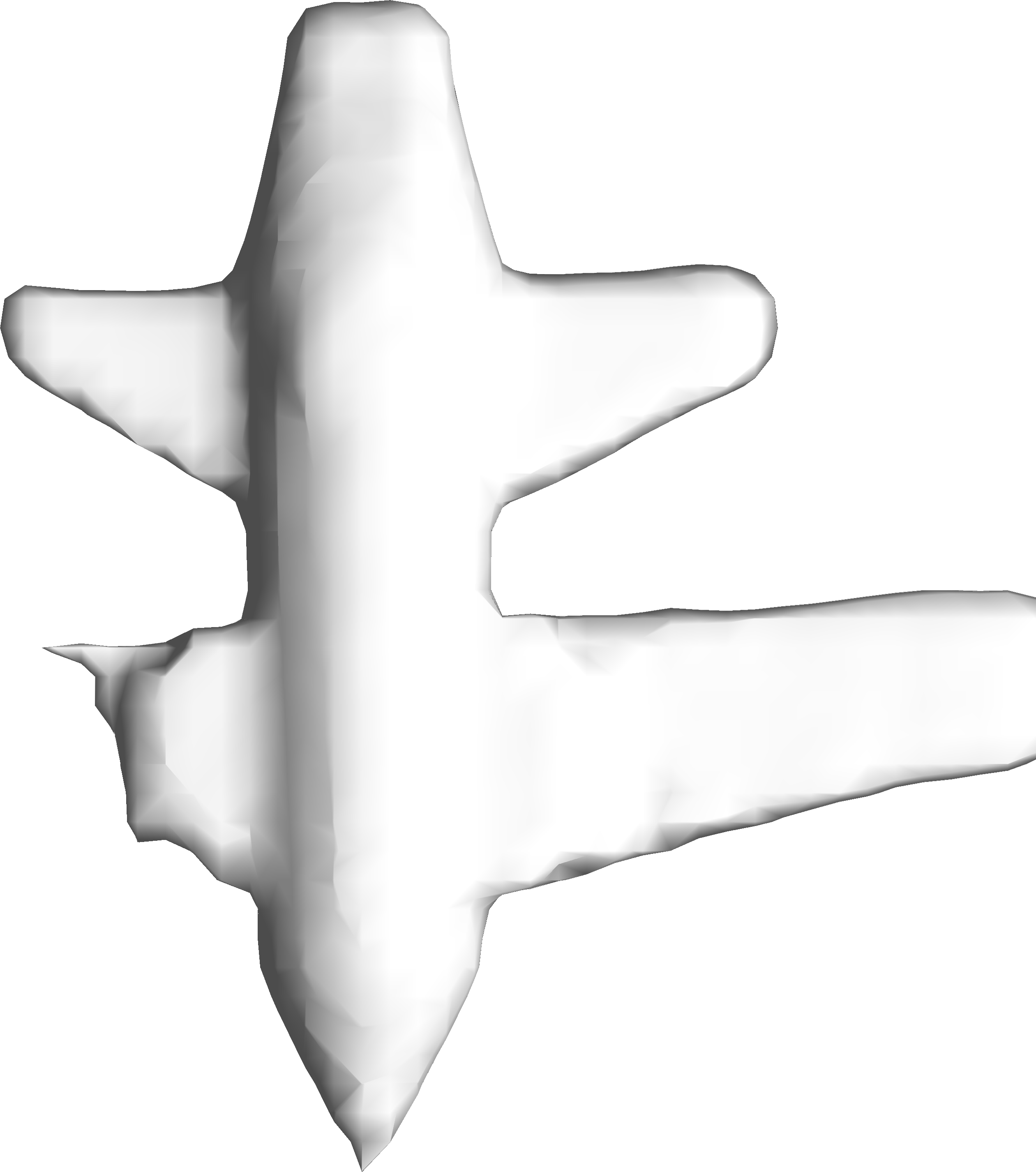,width=0.22\linewidth}}{$\eta=10^{-5}$}
		\stackunder{\epsfig{figure=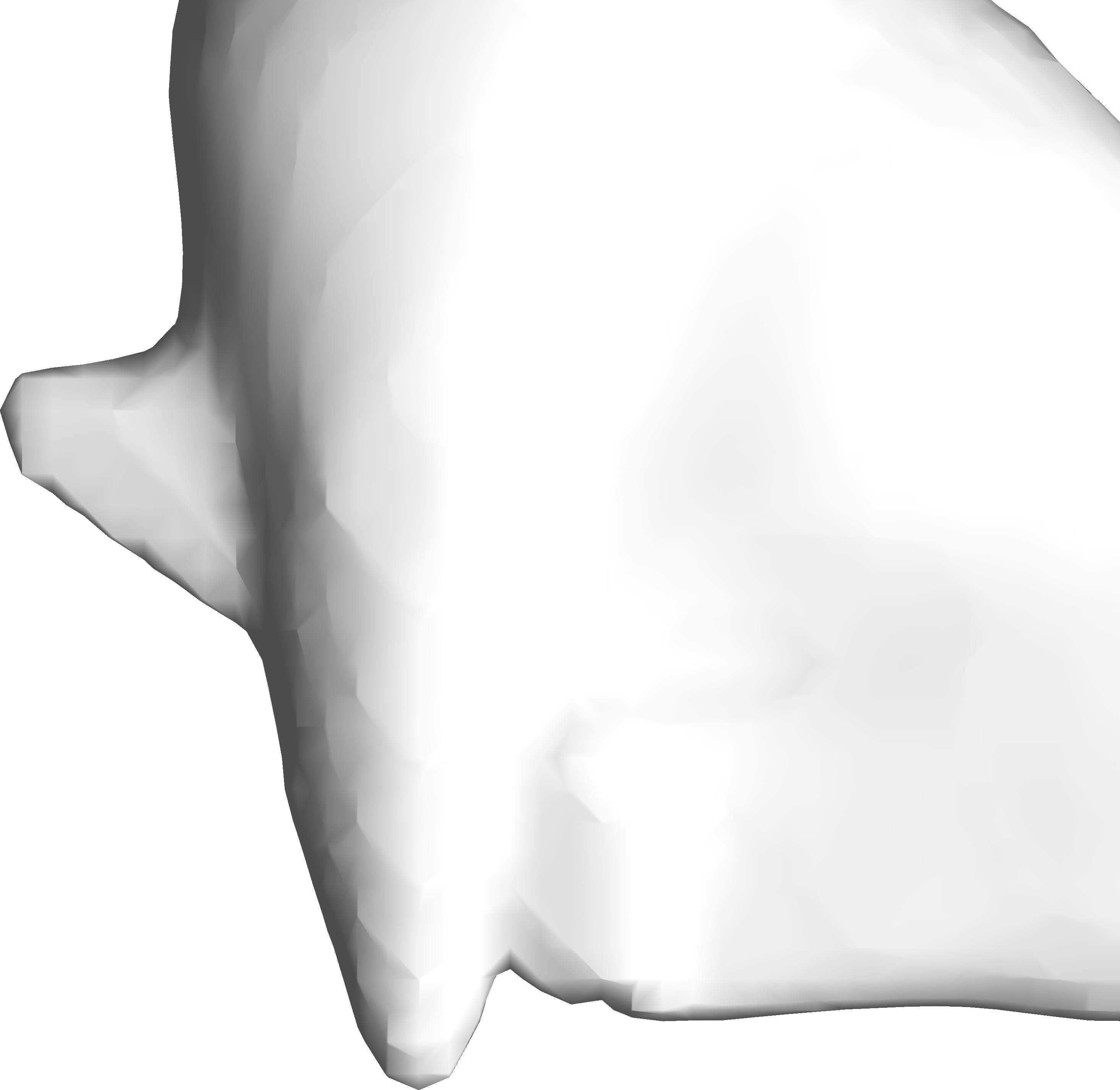,width=0.21\linewidth}}{$\eta=10^{-4}$}
		\stackunder{\epsfig{figure=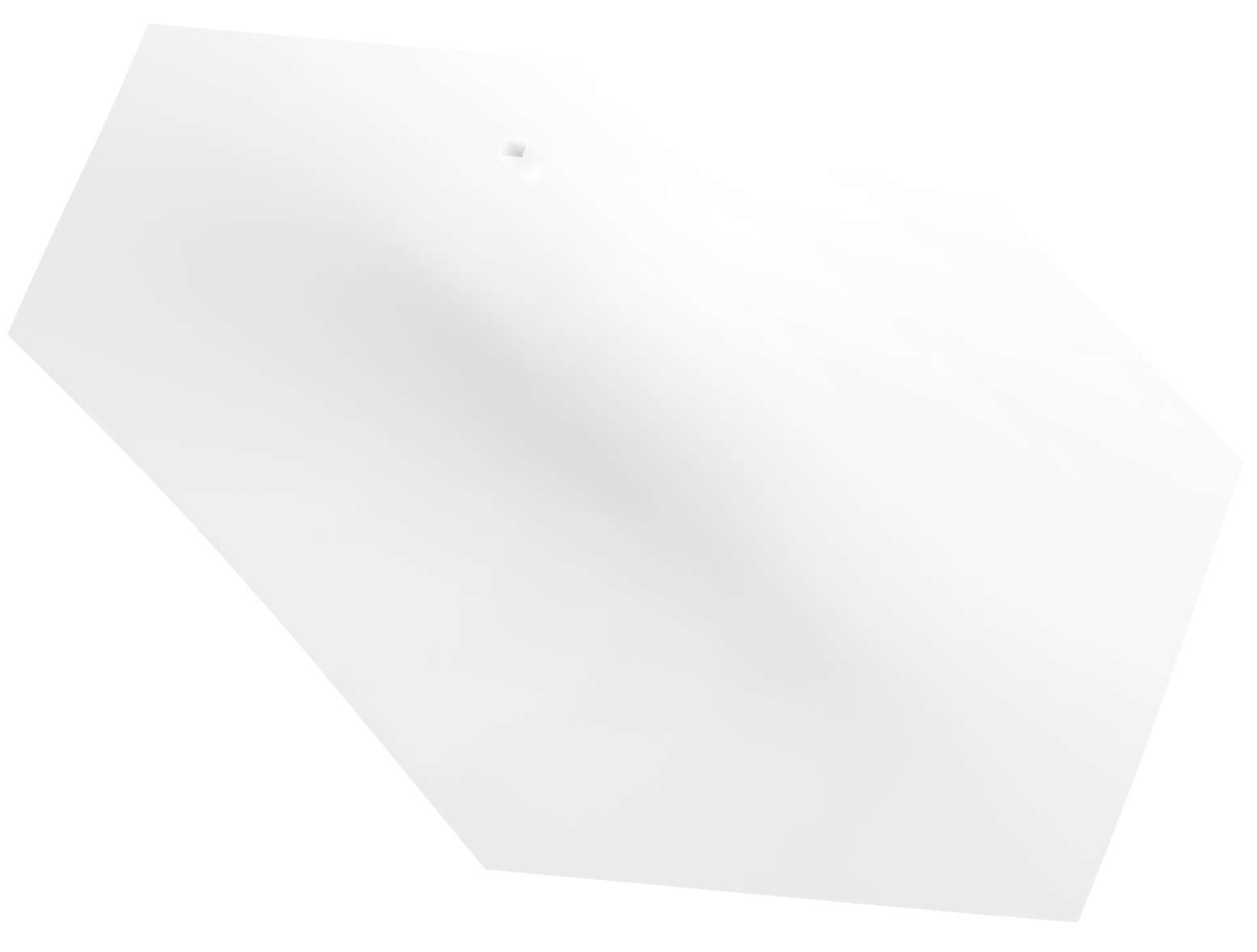,width=0.22\linewidth}}{$\eta=10^{-2}$}
		\caption{Perturbation of HRBF coefficients: Noise was added to each $\beta$ value and the noise was uniformly sampled from the range $[-\eta,\eta]$. We observe that the shape is too sensitive to $\beta$ as even for $\eta=10^{-5}$ one of the wings of the plane has been clipped.}
		\label{betaperturb}
	\end{figure}
	\subsection{Compactly Supported Radial Basis Functions}
	In the LISR, we experimentally observe that the basis functions $\phi_i$'s with global support, such as polynomials \cite{blane20003l,rouhani2010relaxing} and Hermite-RBFs \cite{macedo2011hermite}, do not meet the rank criteria defined in Theorem \ref{th_1} for all the query points $\mathbf{x}\in\mathcal{Q}$. The shapes generated using these basis functions are highly sensitive to coefficients, as shown in Figure \ref{betaperturb}. This makes it harder to learn the distribution of coefficients over a class of shapes. In contrast, the use of compactly supported basis functions reduced the sensitivity of the entire shape to coefficient perturbations and enhanced the correlation between coefficients and their respective basis functions. Intuitively, this implies that only a specific subset of basis functions and their corresponding coefficients control the shape representation in localized regions of the 3D space. Thereby reducing the inter-dependence of coefficients and increasing the matrix rank. Therefore, we propose to use the compactly supported radial basis functions as our basis functions $\phi_i$'s. We define the local support for each basis function using the Voronoi space partition of kernel points. We modify the LISR defined in Equation \eqref{eq1} and define the modified linear implicit surface representation in Equation \eqref{eq6}. 
	\begin{equation}
		f(\mathbf{x}) = \sum_{i=1}^{q} r_i(\mathbf{x}) \boldsymbol{\beta}_i^\top\mathbf{ \nabla} (\|\mathbf{x}-\mathbf{p}_i\|_2^3).
		\label{eq6}
	\end{equation}
	Here, $\boldsymbol{\beta}_i=\begin{bmatrix}\beta_{i,x}& \beta_{i,y}& \beta_{i,z}\end{bmatrix}^\top\in\mathbb{R}^3$ is a vector formed by three unknown parameters, $\mathbf{p}_i$ is the kernel point and $r_i(\mathbf{x})$ is the function defining the local support for basis with kernel point $\mathbf{p}_i$,i.e, if the point $\mathbf{x}$ lies in the local support of $\mathbf{p}_i$ then  $r_i(\mathbf{x}) $ evaluates to one, zero otherwise.
	\subsection{Query Point Selection}
	
	Along with a proper choice of basis functions, the selection of query points is a critical step to satisfy the full rank matrix criteria, as all the query points do not satisfy the rank criterion (Theorem \ref{th_1}). Our strategy for query point selection is to select at least three linearly independent query points from each local support. This strategy enables us to make the submatrix of $\mathbf{V}^\top$ corresponding to the particular local support a full rank matrix. 
	\begin{theorem}
		The matrix $\mathbf{VV}^\top$ is a full rank matrix for the linear implicit surface representation using the proposed compactly supported radial basis functions and the query point selection strategy.
	\end{theorem}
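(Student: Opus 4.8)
The plan is to turn the compact (Voronoi) supports into a genuine block structure for $\mathbf{V}^\top$ and then verify a single $3\times3$ rank condition inside each cell. First note that in Equation~\eqref{eq6} each kernel point $\mathbf{p}_i$ contributes three scalar basis functions, $\mathbf{x}\mapsto r_i(\mathbf{x})\,\partial_k\big(\|\mathbf{x}-\mathbf{p}_i\|_2^3\big)$ for $k\in\{1,2,3\}$, so $\mathbf{V}^\top\in\mathbb{R}^{m\times 3q}$ and $\mathbf{VV}^\top\in\mathbb{R}^{3q\times 3q}$ is symmetric positive semidefinite. Since $\mathrm{rank}(\mathbf{VV}^\top)=\mathrm{rank}(\mathbf{V})$, it suffices to show that these $3q$ basis functions, sampled at the query points, are linearly independent vectors in $\mathbb{R}^m$; equivalently, that $\mathbf{V}^\top\mathbf{c}=\mathbf{0}$ forces $\mathbf{c}=\mathbf{0}$, which is what I would prove.

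Next I would record the consequence of the query point selection strategy: by hypothesis, for each $i$ it supplies three query points $\mathbf{x}$ lying in the local support $C_i$ of $\mathbf{p}_i$ whose displacements $\mathbf{x}-\mathbf{p}_i$ span $\mathbb{R}^3$ (this is the meaning of ``three linearly independent query points from each local support'', and it forces $m\ge 3q$). I would note this is always realizable: each Voronoi cell is a full-dimensional polyhedron, a finite intersection of half-spaces containing an open neighbourhood of $\mathbf{p}_i$, so one can pick three interior points of $C_i$ in general position about $\mathbf{p}_i$ (e.g.\ $\mathbf{p}_i+\epsilon\mathbf{e}_k$ for small $\epsilon$); choosing them in the interior also keeps $\mathbf{x}\neq\mathbf{p}_i$ and away from the boundary of the partition.

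Then I would exploit disjointness of supports. Because the cells $C_1,\dots,C_q$ partition $\mathbb{R}^3$ and $r_j$ vanishes off $C_j$, a query point in the interior of $C_i$ activates only the three basis functions attached to $\mathbf{p}_i$. Hence, if $\mathbf{V}^\top\mathbf{c}=\mathbf{0}$ and $\mathbf{c}_i\in\mathbb{R}^3$ denotes the subvector of $\mathbf{c}$ indexed by $\mathbf{p}_i$, then reading off the coordinates corresponding to the three special query points of $C_i$ gives $3\|\mathbf{x}-\mathbf{p}_i\|_2\,(\mathbf{x}-\mathbf{p}_i)^\top\mathbf{c}_i=0$ for those three $\mathbf{x}$, using $\nabla\big(\|\mathbf{x}-\mathbf{p}_i\|_2^3\big)=3\|\mathbf{x}-\mathbf{p}_i\|_2(\mathbf{x}-\mathbf{p}_i)$. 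Dividing by the positive scalars $3\|\mathbf{x}-\mathbf{p}_i\|_2$ and using that the three displacements $\mathbf{x}-\mathbf{p}_i$ form a basis of $\mathbb{R}^3$ yields $\mathbf{c}_i=\mathbf{0}$; since $i$ is arbitrary, $\mathbf{c}=\mathbf{0}$, so $\mathbf{V}^\top$ has full column rank $3q$ and $\mathbf{VV}^\top$ is invertible, i.e.\ full rank. Equivalently, grouping the query points by cell puts $\mathbf{V}^\top$ in block-diagonal form $\mathrm{diag}(B_1,\dots,B_q)$, each $B_i$ of rank $3$.

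The step I expect to be the crux is this last reduction: ensuring that ``three linearly independent query points'' can be realised inside \emph{every} cell and that the resulting per-cell block is genuinely of rank $3$. This is exactly where compact support does the work --- for globally supported bases (polynomials, Hermite-RBFs) no block decoupling occurs and, as discussed after Theorem~\ref{th_1} and illustrated in Figure~\ref{betaperturb}, $\mathbf{VV}^\top$ need not be full rank. I would also state explicitly the mild genericity/tie-breaking convention on $r_i$ at cell boundaries that is implicit in treating the supports as disjoint.
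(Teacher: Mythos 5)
Your proposal is correct and follows essentially the same route as the paper's proof: the compact Voronoi supports make $\mathbf{V}^\top$ block-diagonal with one $m_i\times 3$ block per kernel point, and the three linearly independent query displacements per cell make each block rank $3$, hence $\operatorname{rank}(\mathbf{VV}^\top)=\operatorname{rank}(\mathbf{V})=3q$. If anything, your explicit computation $\nabla\bigl(\|\mathbf{x}-\mathbf{p}_i\|_2^3\bigr)=3\|\mathbf{x}-\mathbf{p}_i\|_2(\mathbf{x}-\mathbf{p}_i)$, showing each block's rows are positive multiples of the independent displacement vectors, is tighter than the paper's ``trivial to observe'' step, and your remark about boundary tie-breaking for the $r_i$'s makes explicit an assumption the paper leaves implicit.
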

	\begin{proof}
		Consider the CSRBF formulation of linear implicit surface as defined in  Equation \eqref{eq6}. We rewrite the  SDF $f(\mathbf{x})$ by substituting the expanded gradients and coefficients as: 
		\begin{multline}
			f(\mathbf{x}) = \sum_{i=1}^{q} r_i(\mathbf{x}) \biggl( \beta_{i,x} \frac{\partial}{\partial x}(\| \mathbf{x} - \mathbf{p}_i \|_2^3) \\
			+ \beta_{i,y} \frac{\partial}{\partial y}(\| \mathbf{x} - \mathbf{p}_i \|_2^3) + \beta_{i,z} \frac{\partial}{\partial z}(\| \mathbf{x} - \mathbf{p}_i \|_2^3) \biggr)
		\end{multline}
		Now, the proposed CSRBF-based linear implicit surface representation can be written as
		\begin{eqnarray}
			f(\mathbf{x}) &=& \sum_{i=1}^{3q} \alpha_i \phi_i(\mathbf{x})
			\label{eq8}.
		\end{eqnarray}
		Where, $\phi_i$'s and $\alpha_i$'s  are defined as in Equations \eqref{eq9} and \eqref{eq10}, respectively. 
		\begin{equation}
			\phi_i(\mathbf{x}) = \begin{cases}
				r_{\frac{i+2}{3}}(\mathbf{x})\frac{\partial}{\partial x}(\| \mathbf{x} - \mathbf{p}_k \|_2^3), & \text{if } i = 3k+1 \\
				r_{\frac{i+1}{3}}(\mathbf{x})\frac{\partial}{\partial y}(\| \mathbf{x} - \mathbf{p}_k \|_2^3), & \text{if } i = 3k+2 \\
				r_{\frac{i}{3}}(\mathbf{x})\frac{\partial}{\partial z}(\| \mathbf{x} - \mathbf{p}_k \|_2^3), & \text{if } i = 3k \\
			\end{cases}
			\label{eq9}
		\end{equation}
		\begin{equation}
			\alpha_i(\mathbf{x}) = \begin{cases}
				\beta_{{\frac{i+2}{3}},x}, & \text{if } i = 3k+1 \\
				\beta_{{\frac{i+1}{3}},y} & \text{if } i = 3k+2 \\
				\beta_{{\frac{i}{3}},z}, & \text{if } i = 3k. \\
			\end{cases}
			\label{eq10}
		\end{equation}
		Now, with this modified definition of LISR, we can rewrite the $i$-th row of the matrix $\mathbf{V}^\top$ as: $\begin{bmatrix}\mathbf{0}_{m_i\times3i-3} & \mathbf{M}_{i} & \mathbf{0}_{m_i\times{3q-3i}}\end{bmatrix}$
			where $m_i$ represents the number of query points present in local support corresponding to $\mathbf{p}_i$ and $$
			\mathbf{M}_i = \begin{bmatrix}
				\phi_{3i-2}(x_{i,1}) & \phi_{3i-1}(x_{i,1}) & \phi_{3i}(x_{i,1}) \\
				\vdots &\vdots  & \vdots \\
				\phi_{3i-2}(x_{i,m_i}) & \phi_{3i-1}(x_{i,m_i}) & \phi_{3i}(x_{i,m_i}) \\
			\end{bmatrix}.$$
			\begin{figure*}[h!] 
				\centering
				\includegraphics[width=0.95\linewidth]{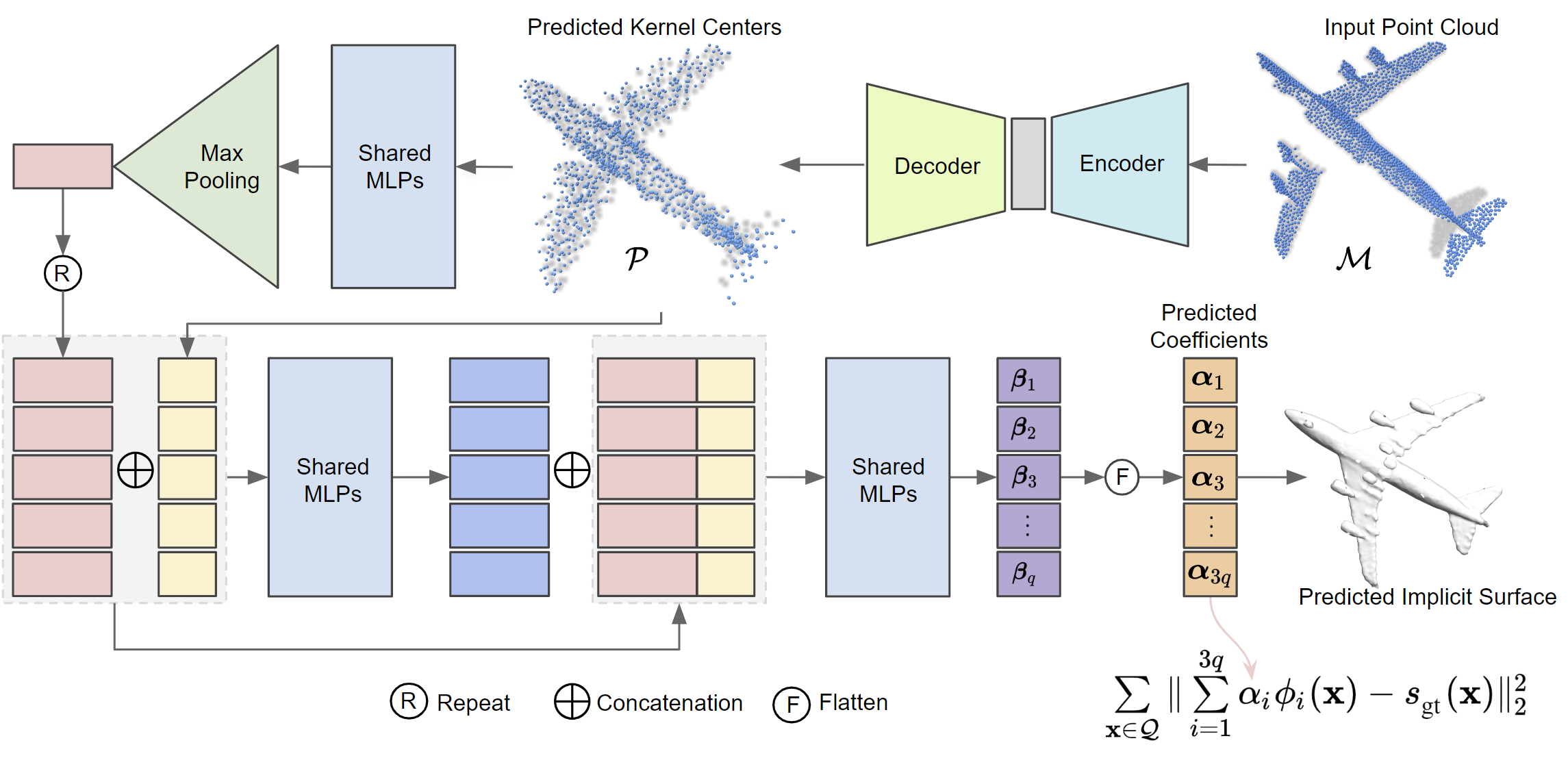}
				\caption{The proposed framework consists of two interconnected networks: the Basis Prediction Network (top) and the Coefficient Prediction Network (bottom). The Basis Prediction Network predicts the kernel points of the Compactly Supported Radial Basis Function (CSRBF) for partial shapes, while for complete shapes, the input point cloud can directly serve as the kernel points for basis. The learned basis kernel points are then passed into the Coefficient Prediction Network, which estimates the coefficients corresponding to each basis. Together, these predictions form a linear implicit shape representation for a given input point cloud.}
				\label{fig:fc}
			\end{figure*}
			For $\mathbf{V}^\top$ to be a full-rank matrix, each of the $\mathbf{M}_i$ has to be full-rank. It is trivial to observe that $\mathbf{M}_i$ is a full-rank as the query points are selected along the $m_i$ linearly independent directions (Algorithm \ref{alg:algorithm2}). Since $\mathbf{M}_i$ is a full-rank linear transformation on query points, the query point matrix must be full-rank (which is 3). Hence there must exist at least three linearly independent points in the set of query points belonging to each local support such that $\mathbf{VV}^\top$ is a full-rank matrix.
			Hence, the proposed query point sampling strategy is sufficient for learning the linear implicit shape with a CSRBF basis. 
		\end{proof}
		To improve the rate of convergence, we sample the points $\{ \mathbf{p}_i+\epsilon \hat{i},\mathbf{p}_i+\epsilon \hat{j},\mathbf{p}_i+\epsilon \hat{k}\}$ for local support of $\mathbf{p}_i$ instead of sampling any three linearly independent points in local support. The value of $\epsilon$ is common across local supports and chosen such that all sampled query points lie in their respective local supports. The gradient of the SDF loss can be written as follows:
		$$\nabla_{\boldsymbol{\alpha} } \left(\sum_{\mathbf{x}\in\mathcal{Q}} \| f(\mathbf{x}) - S_{\text{gt}}(\mathbf{x}) \|_2^2\right) = 2\mathbf{V}\mathbf{V}^\top\left(\boldsymbol{\alpha} - \boldsymbol{\alpha}^*\right)$$
		where $\boldsymbol{\alpha}^* = (\mathbf{V}\mathbf{V}^\top)^{-1}\mathbf{V}\mathbf{s}_{\text{gt}}$ is the solution to the optimization problem. The sampling strategy in Algorithm \ref{alg:algorithm2} ensures that $\mathbf{V}\mathbf{V}^\top = c\mathbf{I}$ where $c$ is an arbitrary constant and $\mathbf{I}$ is the identity matrix. This shows that gradient descent always occurs in the direction of global optima. This ensures that all dimensions are similarly scaled allowing the model to train faster.
		\begin{algorithm}[tb]
			\caption{Faster-Convergence Query Point Selection}
			\label{alg:algorithm2}
			\textbf{Input}: Basis Kernel Points $\{\mathbf{p}\}_{i=1}^q$ and their 3D Voronoi Cells $\{\mathcal{S}_i\}_{i=1}^q$\\
			\textbf{Output}: Query Points set $\mathcal{Q}$
			\begin{algorithmic}[1] 
				\STATE Let $\mathcal{Q}=\emptyset$.
				\FOR{Each basis kernel point $\mathbf{p}_i$}
				\STATE Determine $\epsilon$ such that $\epsilon <\texttt{radius}(\mathcal{S}_i)$.
				\STATE  $\mathcal{Q}\gets \mathcal{Q}\cup \{\mathbf{p}_i+\epsilon\hat{i},\mathbf{p}_i+\epsilon\hat{j},\mathbf{p}_i+\epsilon\hat{k}\}$ 
				\ENDFOR
				\STATE \textbf{return} $\mathcal{Q}$
			\end{algorithmic}
		\end{algorithm}
		
		\subsection{Learning LISR}
		Now, we explain our approach for finding the optimal coefficients $\boldsymbol{\alpha}=\begin{bmatrix} \alpha_1&\cdots&\alpha_{3q}\end{bmatrix}^\top$. As shown in Figure \ref{betaperturb}, the final solution is too sensitive to the coefficients, and the final reconstructed surfaces using the classical algorithms are not robust to missing parts in the input point cloud. Hence, we follow a learning-based approach to find optimal coefficients. \\
		\textbf{Predicting Kernel Centers:} Our neural network architecture consists of two sub-networks. The first network is the Basis Prediction Network, which predicts the kernel points of the Compactly Supported Radial Basis Function (CSRBF) for partial shapes. For partial shapes, the network learns to predict the optimal locations for these kernel points. For complete shapes, however, an interesting optimization comes into play: the input point cloud itself can directly serve as the kernel points for the basis, as they provide the necessary information to encode the shape's structure. The structure of the proposed basis prediction network is the simple 3D auto-encoder which is inspired by the network proposed by \cite{yuan2018pcn}. The input to this network is the partial input point cloud $\mathcal{M}=\{\mathbf{x}_1,\ldots,\mathbf{x}_n\}$. The output of this network is the set of kernel 3D points $\mathcal{P}=\{\mathbf{p}_1,\ldots,\mathbf{p}_q\}$. We use the Chamfer-L1 distance between the predicted kernel centers point cloud $\mathcal{P}$ and the input point cloud $\mathcal{M}$  as the loss function to train the kernel center prediction network defined as the below equation:
		\begin{eqnarray}
			\nonumber\ell_{1}&=&\frac{1}{|\mathcal{M}|}\sum_{\mathbf{x}\in\mathcal{M}}\underset{\mathbf{p}\in\mathcal{P}}{\min}\|\mathbf{x}-\mathbf{p}\|_1+\frac{1}{|\mathcal{P}|}\sum_{\mathbf{p}\in\mathcal{P}}\underset{\mathbf{x}\in\mathcal{M}}{\min}\|\mathbf{x}-\mathbf{p}\|_1
		\end{eqnarray}
		\textbf{Coefficient Prediction:} Once we predict the kernel points using the basis prediction network or obtained from the input point cloud for complete shapes, we pass them into the coefficient prediction network. This network is responsible for estimating the coefficients $\alpha_i$ corresponding to each basis $\phi_i$ where $i\in\{1,2,\ldots,3q\}$.\\
		By combining the kernel points and their corresponding coefficients, the linear implicit surface representation is constructed as $f(\mathbf{x}) = \sum_{i=1}^{3q} \alpha_i \phi_i(\mathbf{x})$.  To train the proposed coefficient prediction neural network, we use the following loss between the predicted SDF $f(\mathbf{x})=\sum_{i=1}^{3q}\alpha_i\phi_i(\mathbf{x})$ and the ground-truth SDF $\mathbf{s}_{\text{gt}}(\mathbf{x})$: 
		\begin{eqnarray}
			\ell_{2}&=&\sum_{\mathbf{x}\in\mathcal{Q}}\|\sum_{i=1}^{3q}\alpha_i\phi_i(\mathbf{x})-\mathbf{s}_{\text{gt}}(\mathbf{x})\|_2^2
			\label{eq11}
		\end{eqnarray}
		
		In Figure \ref{fig:fc}, we present the structure of both the kernel center prediction network and the coefficient prediction network and the overall flow of the proposed approach.
		\begin{table}[h]
			\centering
			\begin{tabular}{ccc}
				\hline\hline
				\textbf{Basis Functions} & \textbf{Rank of $\mathbf{VV}^\top$} & \textbf{ Maximum rank}  \\
				\hline
				Tri-Harmonic  & 954 & 1000\\
				Mono-Harmonic  & 997 & 1000\\
				HRBF  & 13 & 3000\\
				CSRBF (Proposed) & 3000 & 3000\\
				\hline
			\end{tabular}
			\caption{Rank of the matrix $\mathbf{VV}^\top$ for different choices of the basis functions. For the convergence of the optimization algorithm to train a coefficient prediction network, the $\mathbf{VV}^\top$ has to be a full rank matrix as shown in Theorem \ref{th_1}.  We observe that the proposed radial basis functions with compact support achieve the full rank constraint. }
			\label{singleshapetable1}
		\end{table}
		\begin{figure*}[!h]
			\centering
			\stackunder{\epsfig{figure=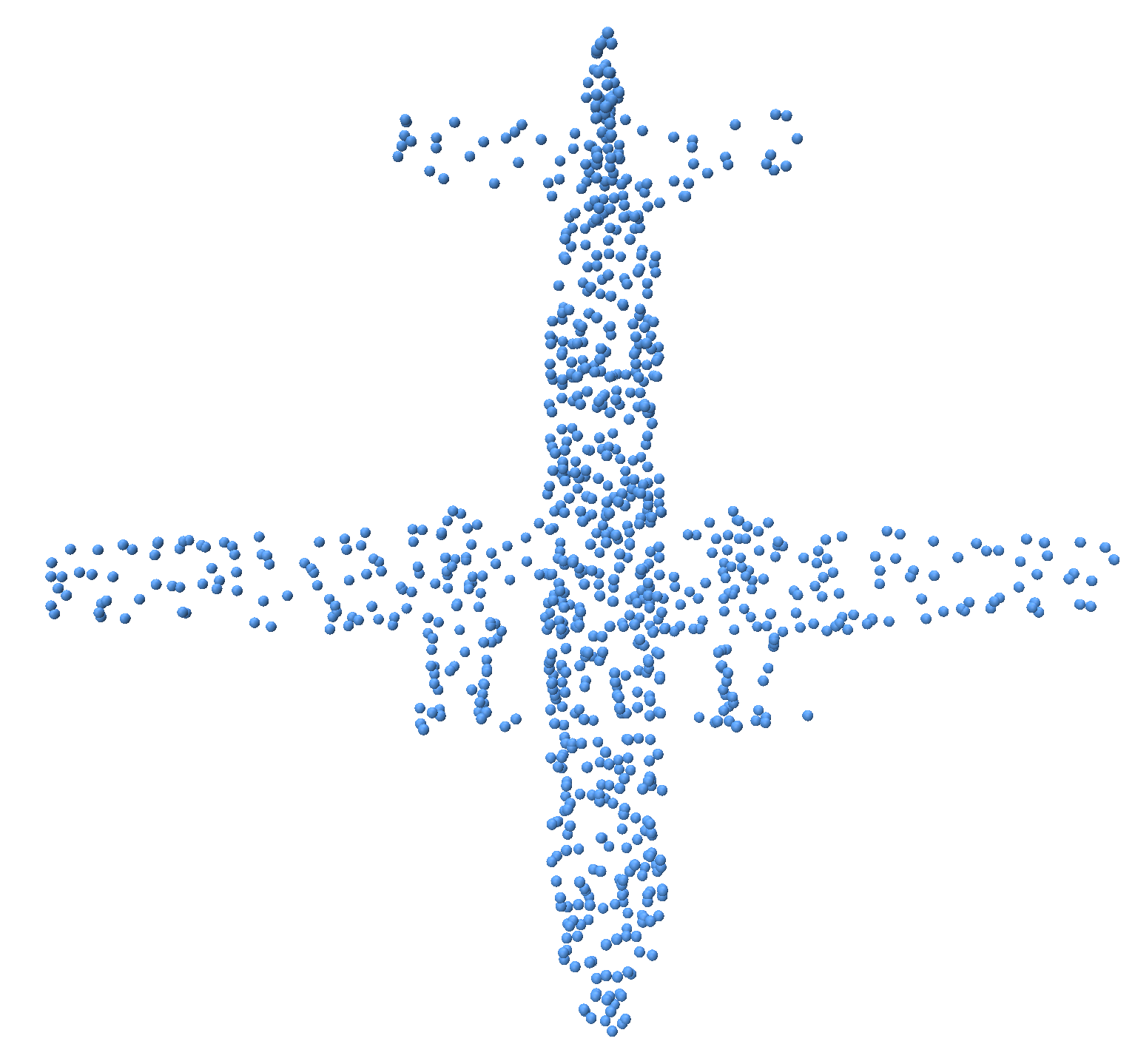,width=0.21\linewidth}}{(a)}
			\stackunder{\epsfig{figure=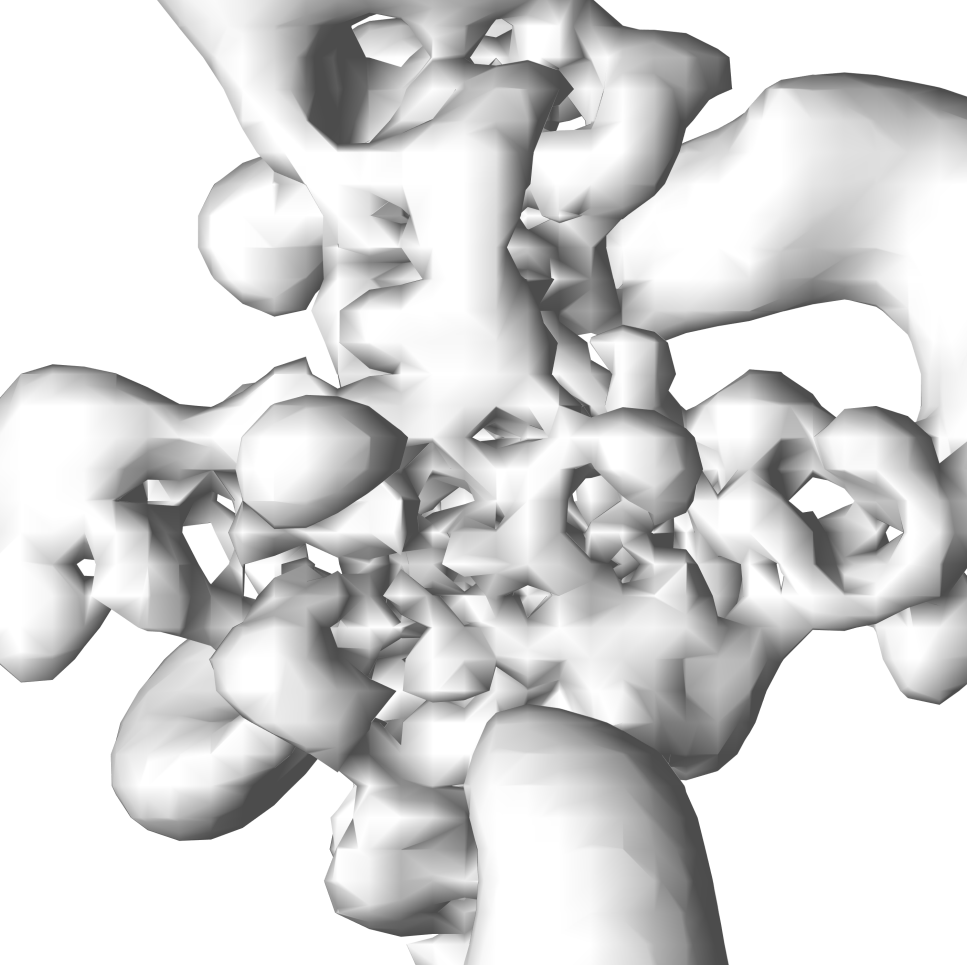,width=0.185\linewidth}}{(b)}
			\stackunder{\epsfig{figure=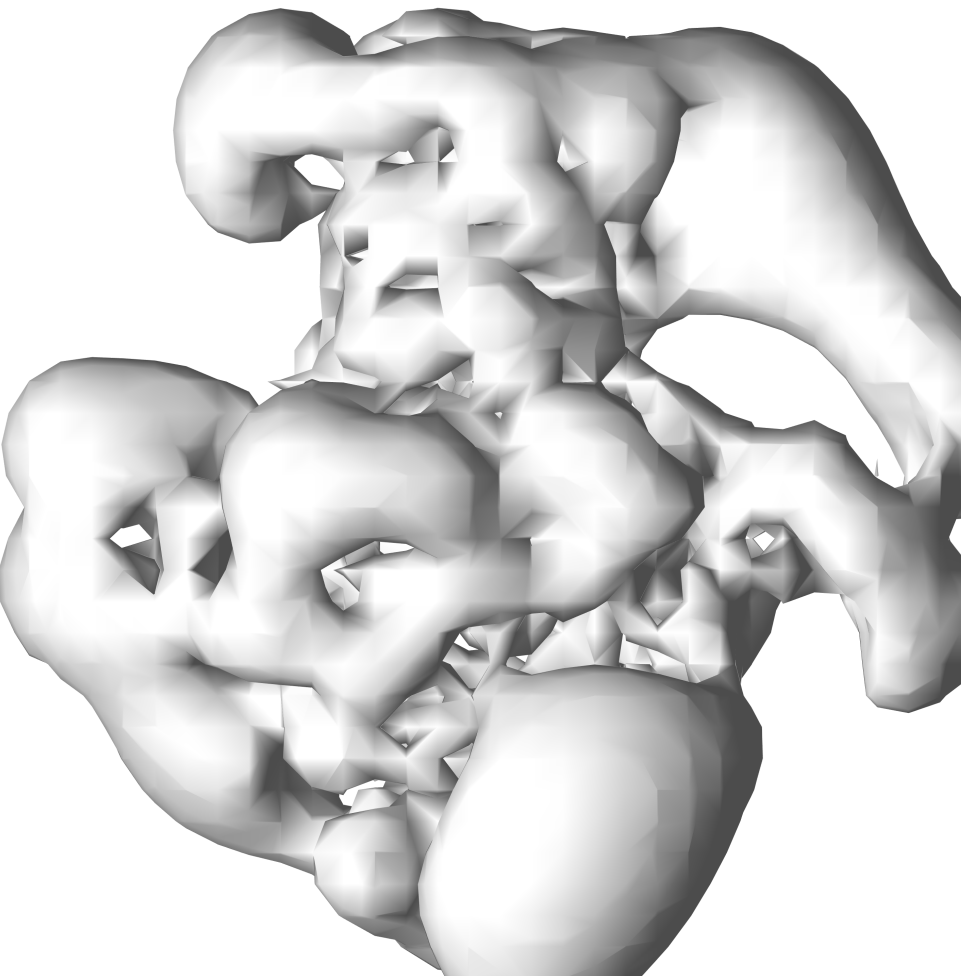,width=0.185\linewidth}}{(c)}
			\stackunder{\epsfig{figure=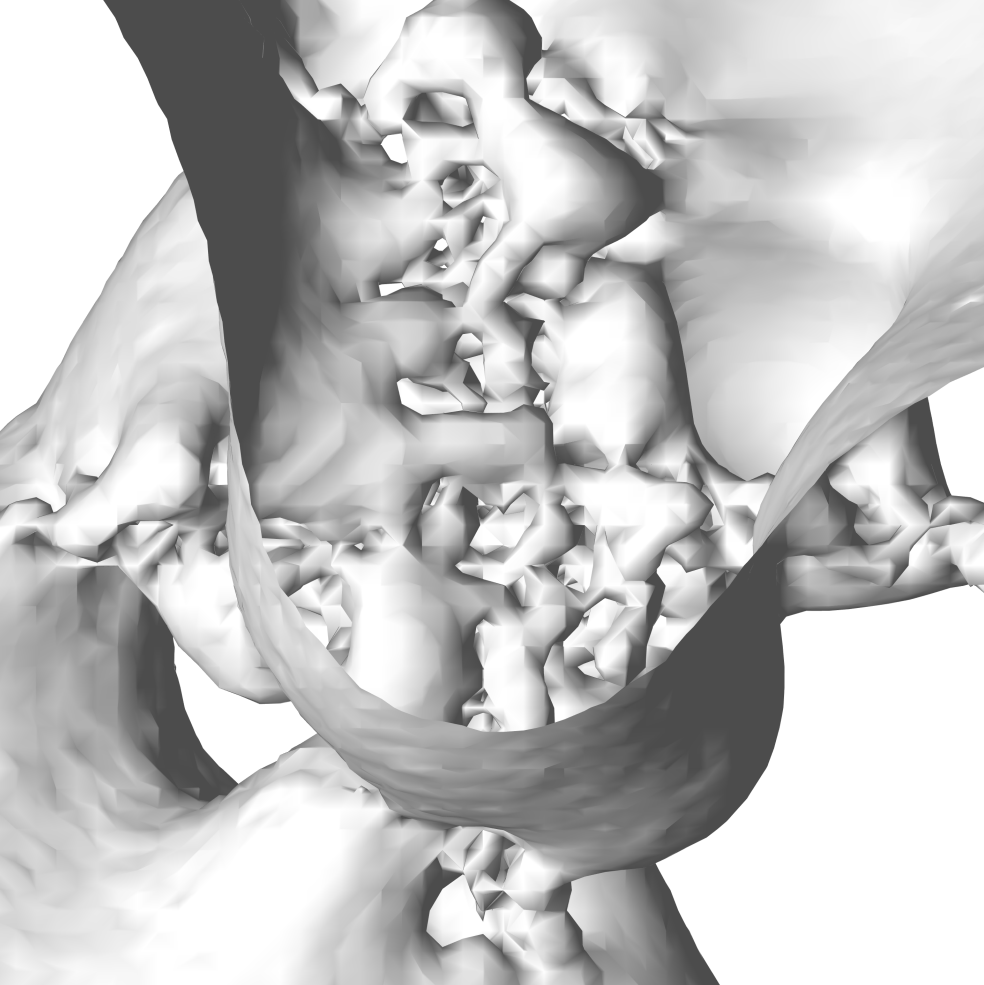,width=0.185\linewidth}}{(d) }
			\stackunder{\epsfig{figure=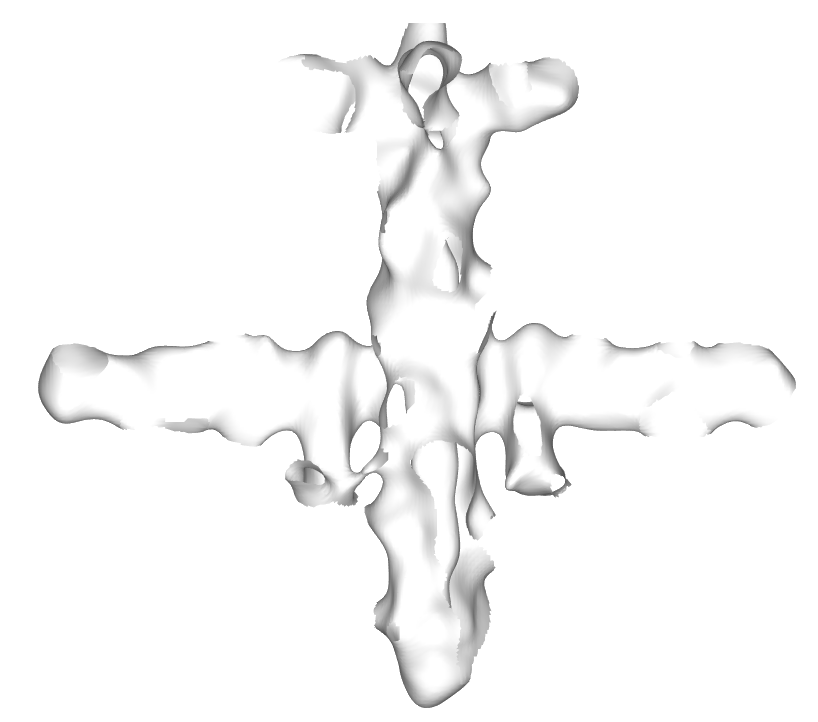,width=0.21\linewidth}}{(e)}
			\caption{Single Shape optimization solution on the Point Cloud (shown (a)) for (b) Tri-harmonic radial basis, (c) Mono-harmonic radial basis, (d) Hermite-Radial basis with uniform query point selection strategy in the learning-based framework and  (e) LISR with query point selection strategy defined in Algorithm \ref{alg:algorithm2} over $[-1,1]^3$ optimization domain and single shape overfitting with only 1000 kernel points. LISR can capture the global shape properties.}
			\label{singleshape}
		\end{figure*}
		
		\section{Experiments and Results}
		In this section, we show the improvement in learning ability with our choice of basis and query points over other basis functions such as Tri-harmonic, Mono-harmonic, and  Hermite-RBFs. We evaluate the ability of our model to generalize over a distribution of shapes by training it on various classes of shapes in the ShapeNet dataset \cite{chang2015shapenet}. We also evaluate the reconstruction of the shapes from partial point clouds by coupling the model with the Basis Prediction Network.
		\subsection{Single Shape Learning}
		We demonstrate the improved shape representation and learning of our approach by comparing our choice of basis and query points against various choices of basis functions with global support such as Tri-harmonic RBFs, mono-harmonic, and HRBF\cite{liu2016closed} with uniformly sampled query points from the optimization domain $[-1,1]^3$. \\
		We formulate the shape representation as given in Equation \eqref{eq8} and use ADAM optimizer to compute the optimal value for $\boldsymbol{\alpha}$ for the loss function given in Equation \eqref{eq11} for over-fitting on a single shape. We compute the rank of the matrix $\mathbf{VV}^\top$ matrix for each choice of basis and their corresponding query points as given in Table \ref{singleshapetable1}. We also perform qualitative analysis by comparing the shape extracted after zero-level iso-surface extraction given in Figure \ref{singleshape}. For the convergence of the optimization algorithm to train a coefficient prediction network, the $\mathbf{VV}^\top$ has to be a full rank matrix as shown in Theorem \ref{th_1}.  We observe that the proposed radial basis functions with compact support achieve the full rank constraint. We have used 1000 kernel centers for this experiment. We observe that, in the learning-based framework, the proposed LIRS can learn the approximate shape of the actual object even with 1000 kernel centers on over-fitting a large network with a single step. Whereas, with other basis functions, it is not able to capture any aspect of the object surface.
		\begin{table*}[h]
			\centering
			\begin{tabular}{ccccccccccc}
				\hline\hline
				& \multicolumn{2}{c}{\textbf{OccNet}} & \multicolumn{2}{c}{\textbf{ConvNet}}&\multicolumn{2}{c}{\textbf{IF-Net}} &\multicolumn{2}{c}{\textbf{3DILG}}&\multicolumn{2}{c}{\textbf{LIRS}}\\
				& \multicolumn{2}{c}{\cite{mescheder2019occupancy}} &  \multicolumn{2}{c}{\cite{peng2020convolutional}}& \multicolumn{2}{c}{(Chibane et al. 2020)}&\multicolumn{2}{c}{(Zhang et al. 2023)}&\multicolumn{2}{c}{ (Proposed)}\\ \cline{2-11}
				Class&CD$\downarrow$&F-Score$\uparrow$&CD$\downarrow$&F-Score$\uparrow$&CD$\downarrow$&F-Score$\uparrow$&CD$\downarrow$&F-Score$\uparrow$&CD$\downarrow$&F-Score$\uparrow$\\
				\cline{1-11}
				Chair&0.058&0.890&0.044&0.934&0.031&0.990&0.029&0.992&\textbf{0.027}&\textbf{0.993}\\
				Aeroplane&0.037&0.948&0.028&0.982&0.020&0.994&\textbf{0.019}&0.993&0.022&\textbf{0.997}\\
				Lamp&0.090&0.820&0.050&0.945&0.038&0.970&0.036&0.971&\textbf{0.028}&\textbf{0.980}\\
				Sofa&0.051&0.918&0.042&0.967&0.032&0.988&0.030&0.986&\textbf{0.027}&\textbf{0.991}\\
				Table&0.041&0.961&0.036&0.982&0.029&0.998&\textbf{0.026}&\textbf{0.999}&0.033&0.974\\\hline
				Mean&0.046&0.907&0.040&0.962&0.030&\textbf{0.988}&0.028&\textbf{0.988}&\textbf{0.027}&0.987\\\hline
			\end{tabular}
			\caption{Comparison of the performance of the proposed approach (LISR) with the performance of the sate-of-the-art approaches OccNet \cite{mescheder2019occupancy}, ConvNet \cite{peng2020convolutional}, IF-Net \cite{chibane2020implicit}, and 3DILG \cite{zhang20223dilg}. We have used the standard evaluation metrics for this task Chamfer Distance (CD) and the F-Score and five classes (Chair, Aeroplane, Lamp, Sofa, and Table) from the ShapeNet dataset \cite{chang2015shapenet}.}    
			\label{singleshapetable}
		\end{table*}
		\subsection{3D Surface Reconstruction Evaluation} To check the performance of the proposed approach for the task of 3D implicit surface fitting, we compare the performance of the proposed approach (LISR) with the performance of the sate-of-the-art approaches OccNet \cite{mescheder2019occupancy}, ConvNet \cite{peng2020convolutional}, IF-Net \cite{chibane2020implicit}, and 3DILG \cite{zhang20223dilg}. We use the standard evaluation metrics for this task such as Chamfer-L1 distance (CD) and the F-Score \cite{zhang20223dilg,tatarchenko2019single}. We use five classes (Chair, Aeroplane, Lamp, Sofa, and Table) from the ShapeNet dataset \cite{chang2015shapenet} to train our network and compare the performance with that of the state-of-the-art approaches. We follow the (23023,575,1152) train-val-test split on ShapeNet. Following the evaluation protocol of  \cite{zhang20223dilg}, we include two metrics, the Chamfer-L1 distance, and F-Score. In our experiment, we set the threshold equal to $0.02$. In Table \ref{singleshapetable}, we present the Chamfer Distance and the F-Score for the proposed approach and the state-of-the-art approaches. We observe that, on average, the proposed approach achieves a marginally better Chamfer distance and marginally lesser F-Score than that of the 3DILG \cite{zhang20223dilg} and IF-Net \cite{chibane2020implicit}. In Figure \ref{res3}, we show the results of the proposed approach for three objects along with the ground-truth 3D surfaces. We observe that the reconstructed implicit surface captures topology information but in the case of the chair, it could not capture one of the spikes in the support of the chair. 
		
		\begin{figure}[!h]
			\centering
			\stackunder{\epsfig{figure=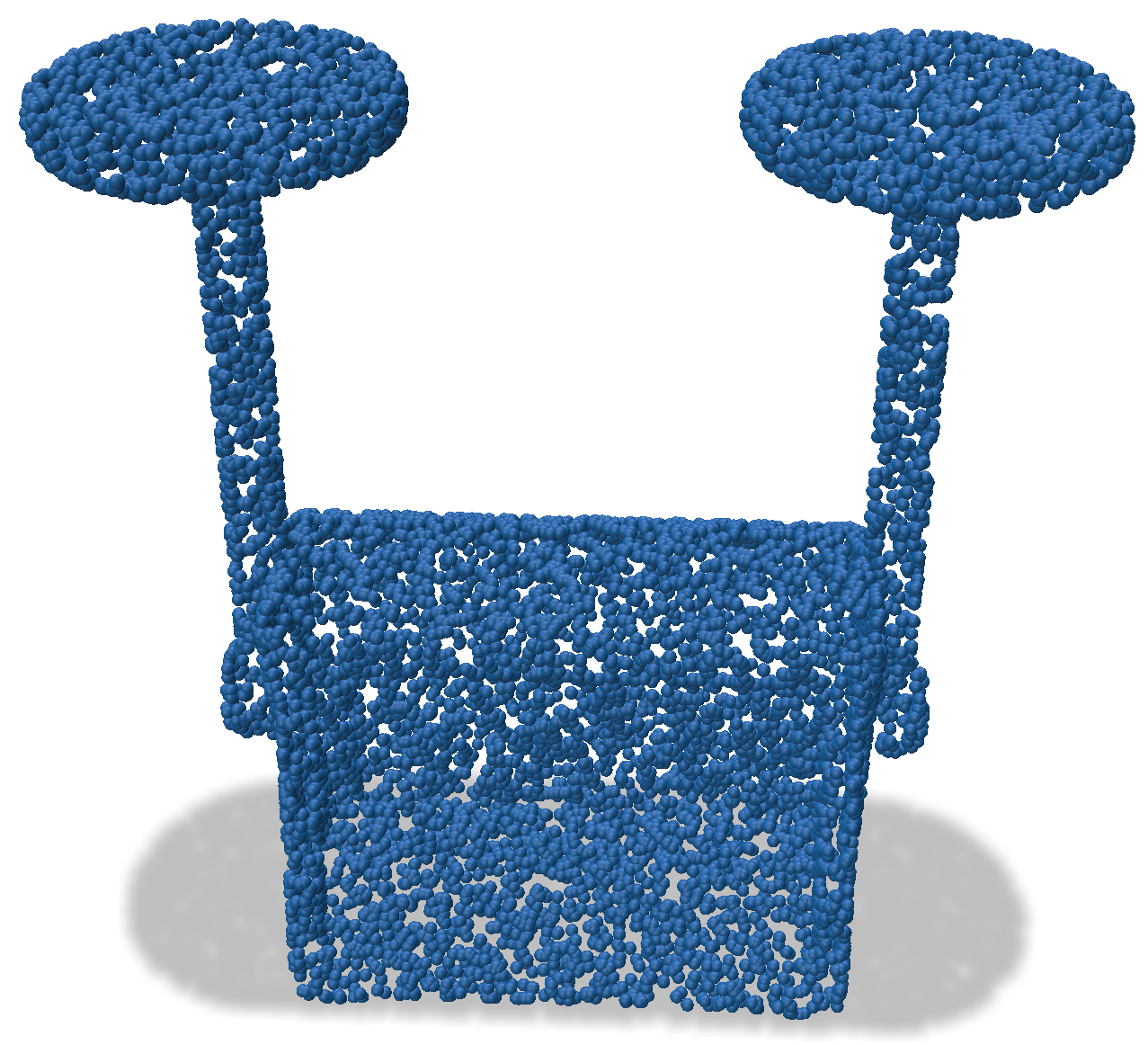,width=0.31\linewidth}}{}
			\stackunder{\epsfig{figure=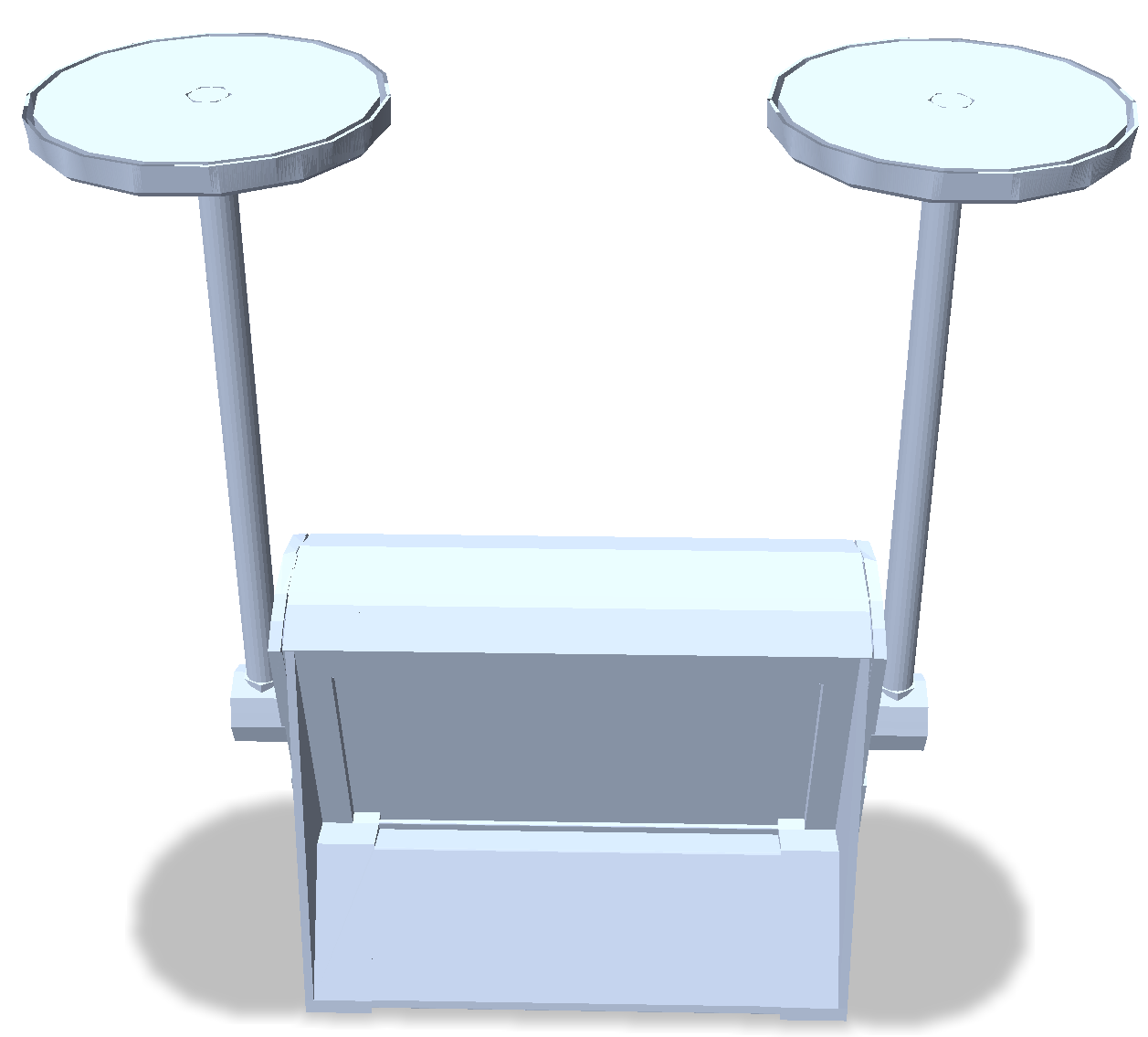,width=0.31\linewidth}}{}
			\stackunder{\epsfig{figure=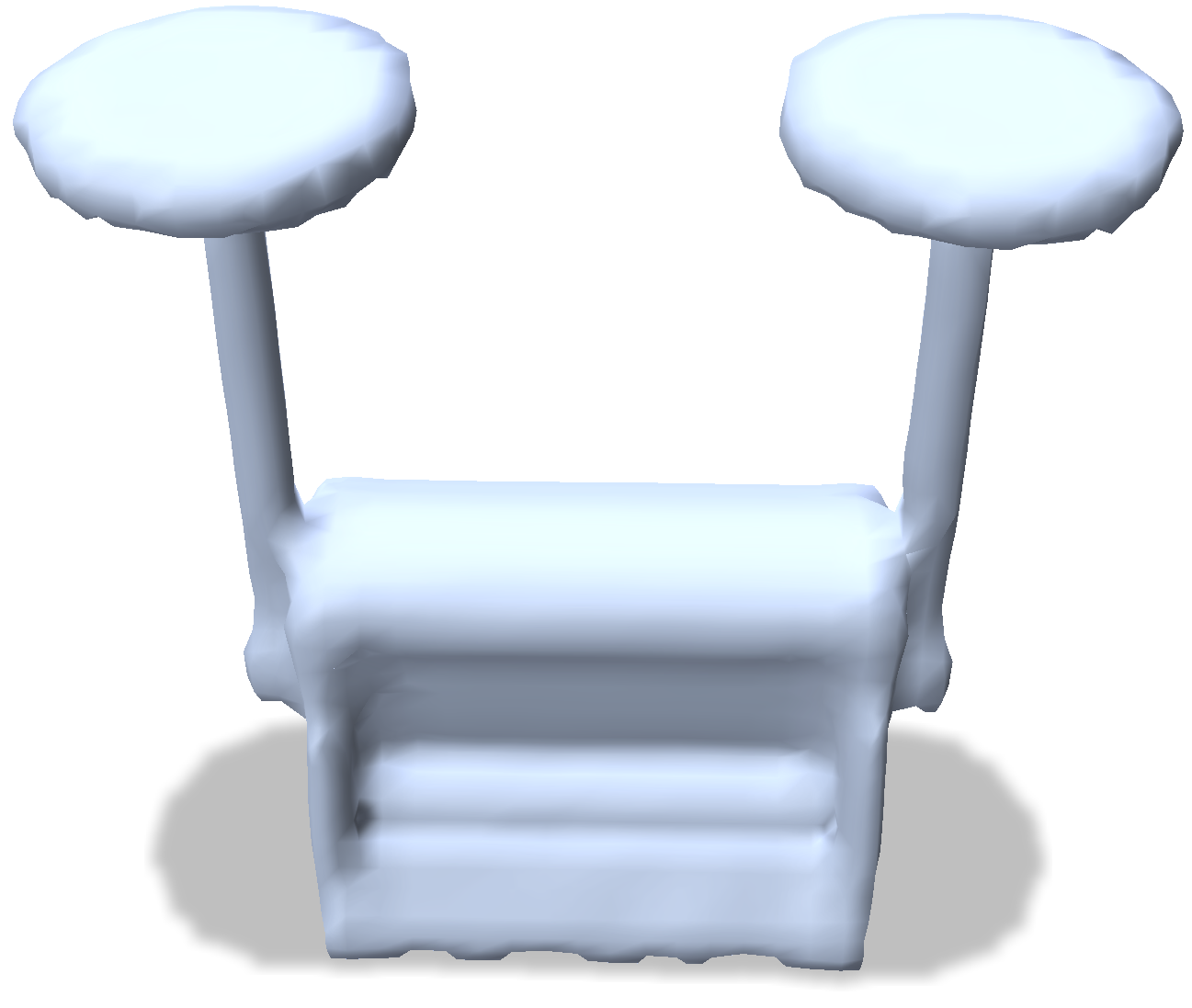,width=0.315\linewidth}}{}
			\stackunder{\epsfig{figure=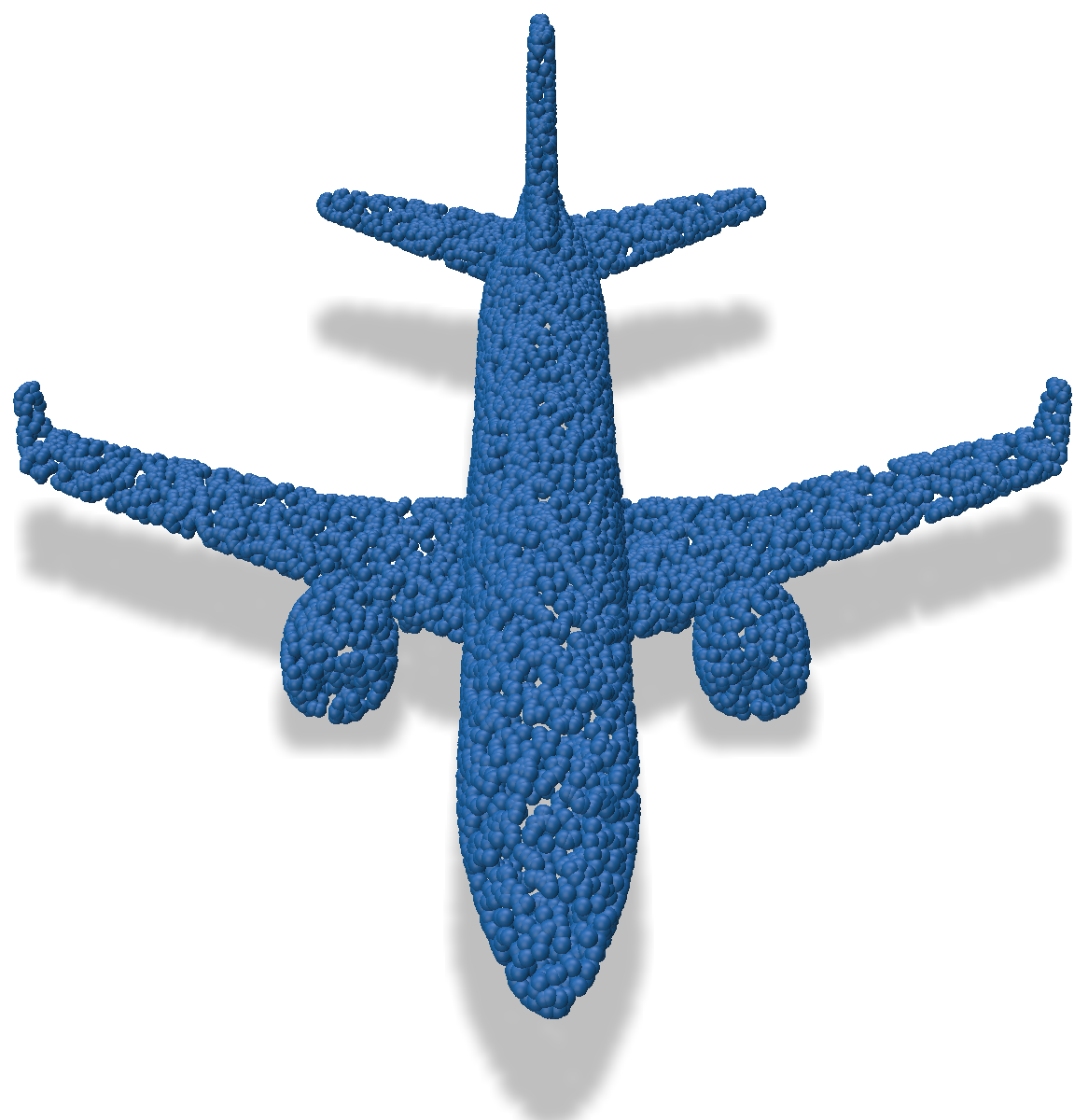,width=0.325\linewidth}}{}
			\stackunder{\epsfig{figure=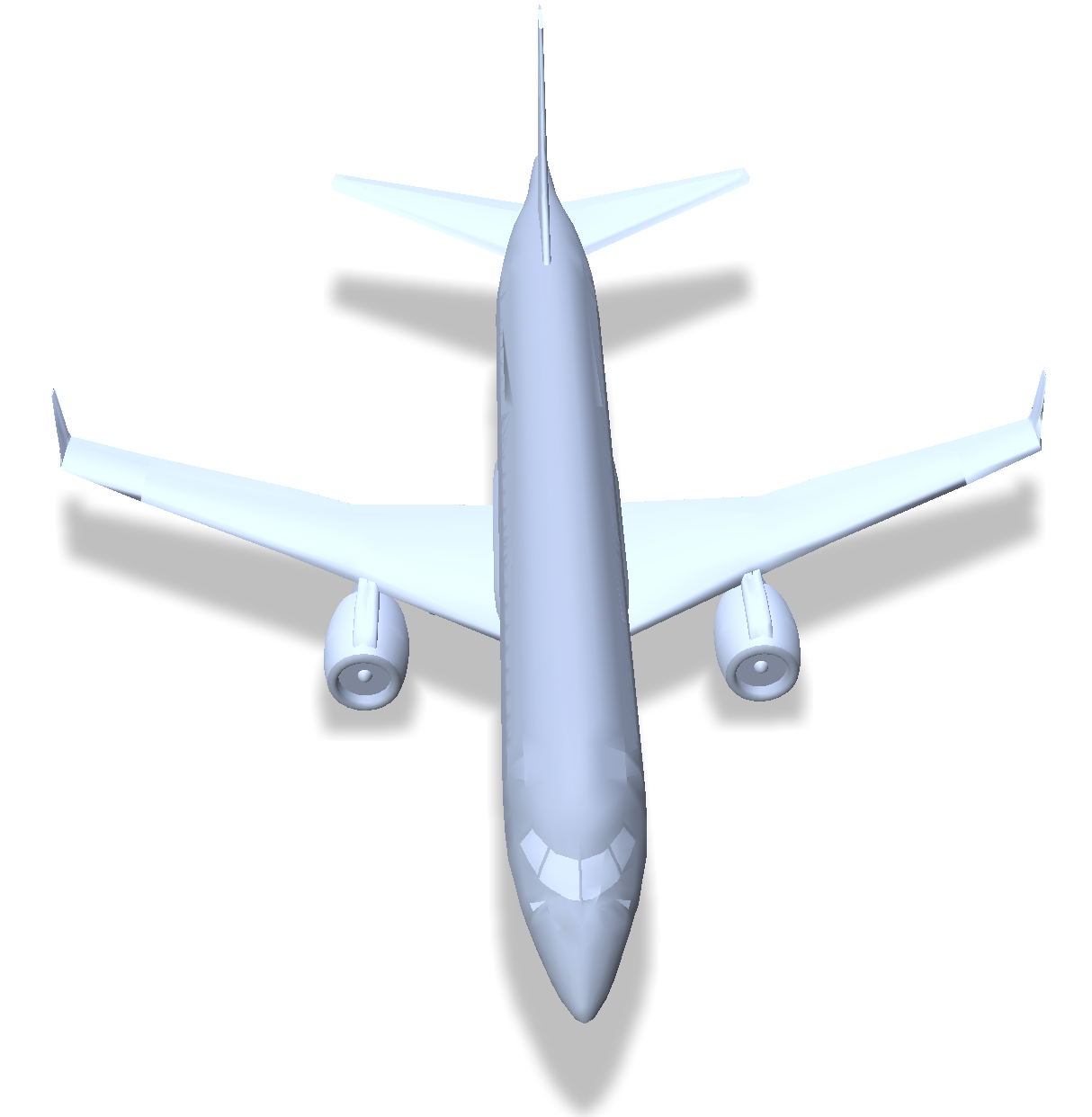,width=0.325\linewidth}}{}
			\stackunder{\epsfig{figure=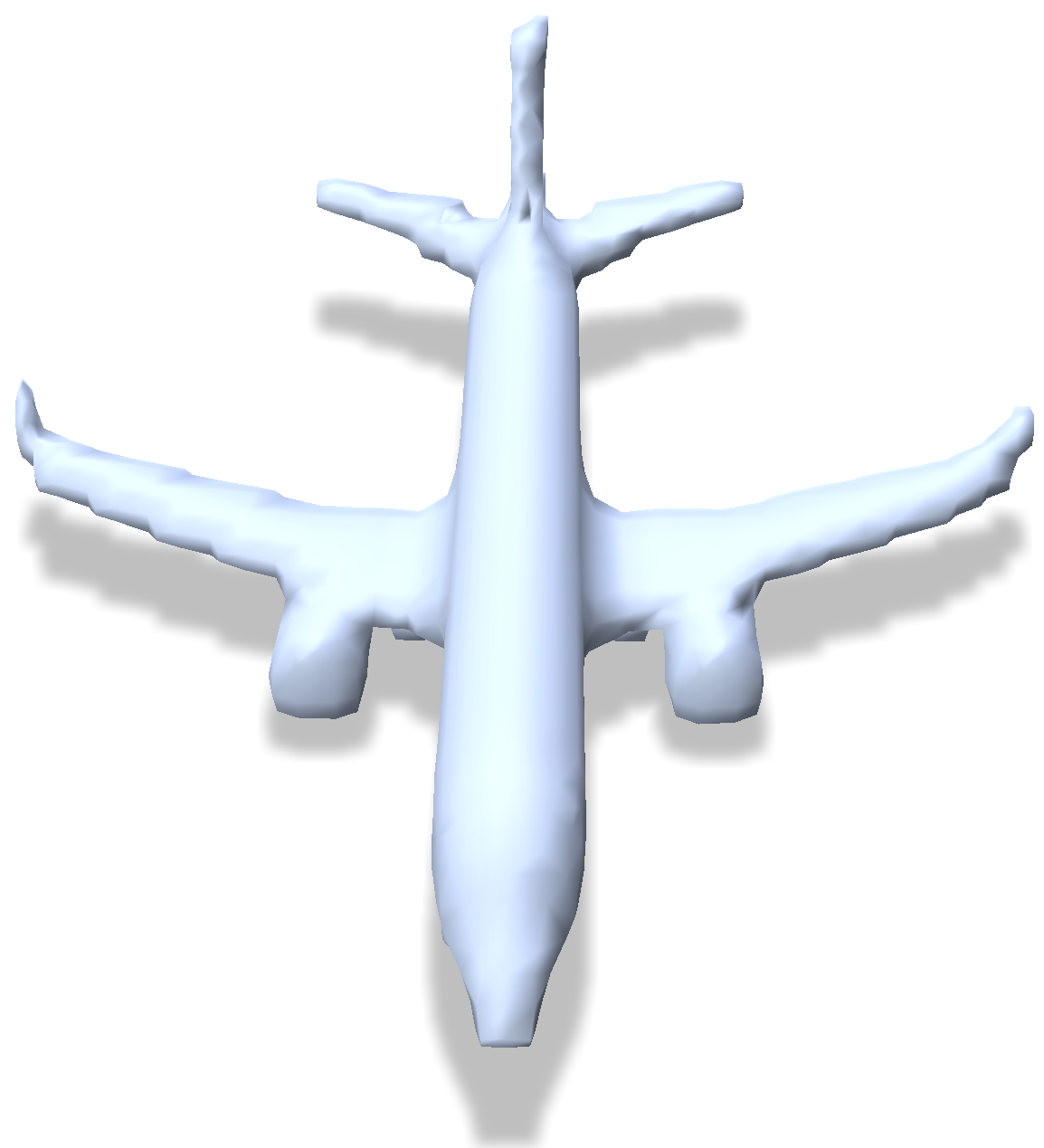,width=0.325\linewidth}}{}
			\stackunder{\epsfig{figure=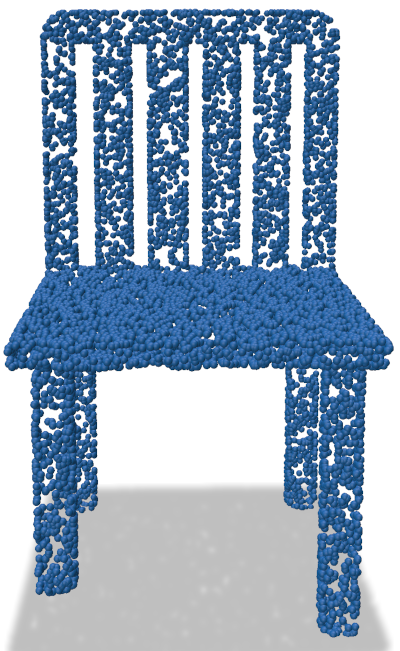,width=0.315\linewidth}}{(a) Input}
			\stackunder{\epsfig{figure=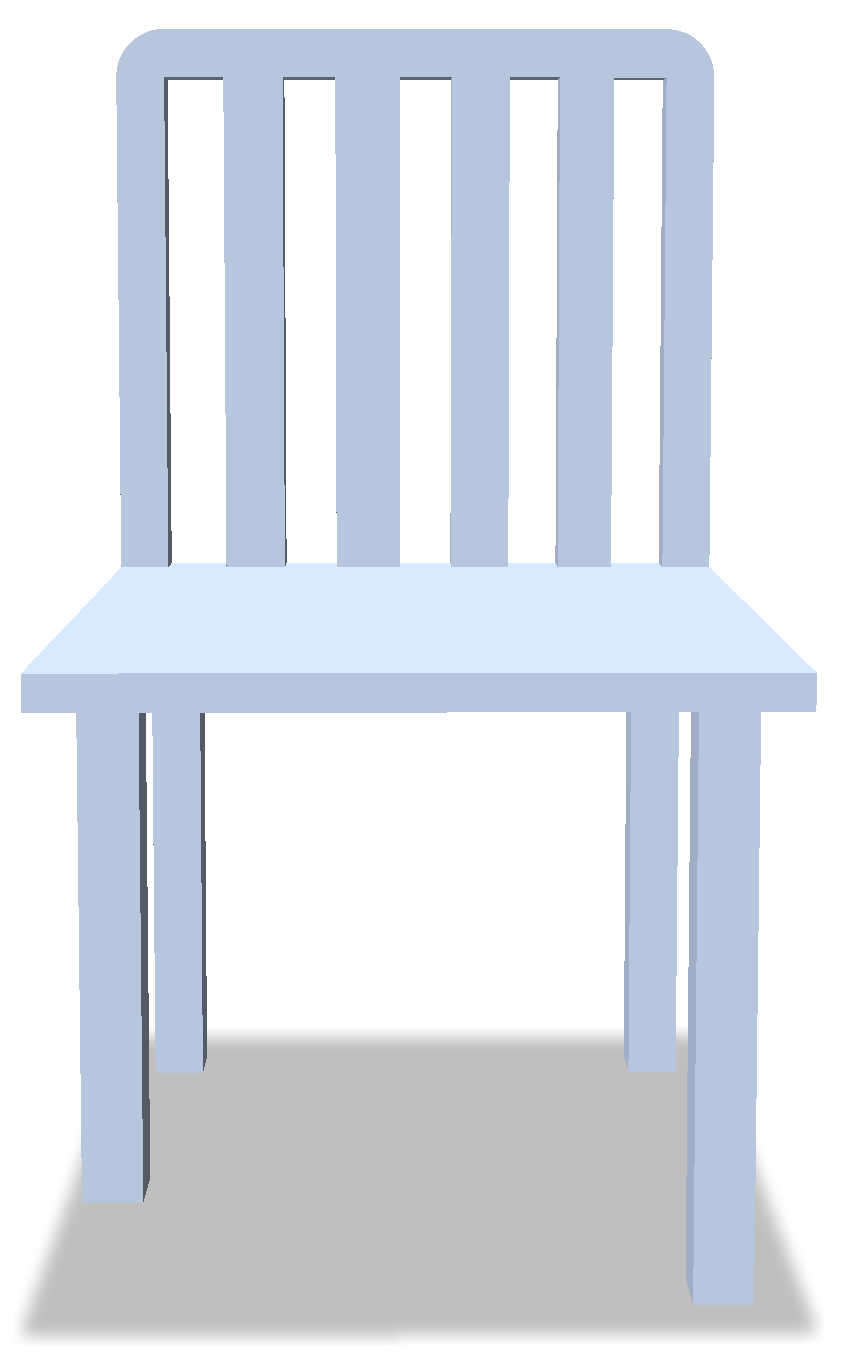,width=0.32\linewidth}}{(b) Ground-truth}
			\stackunder{\epsfig{figure=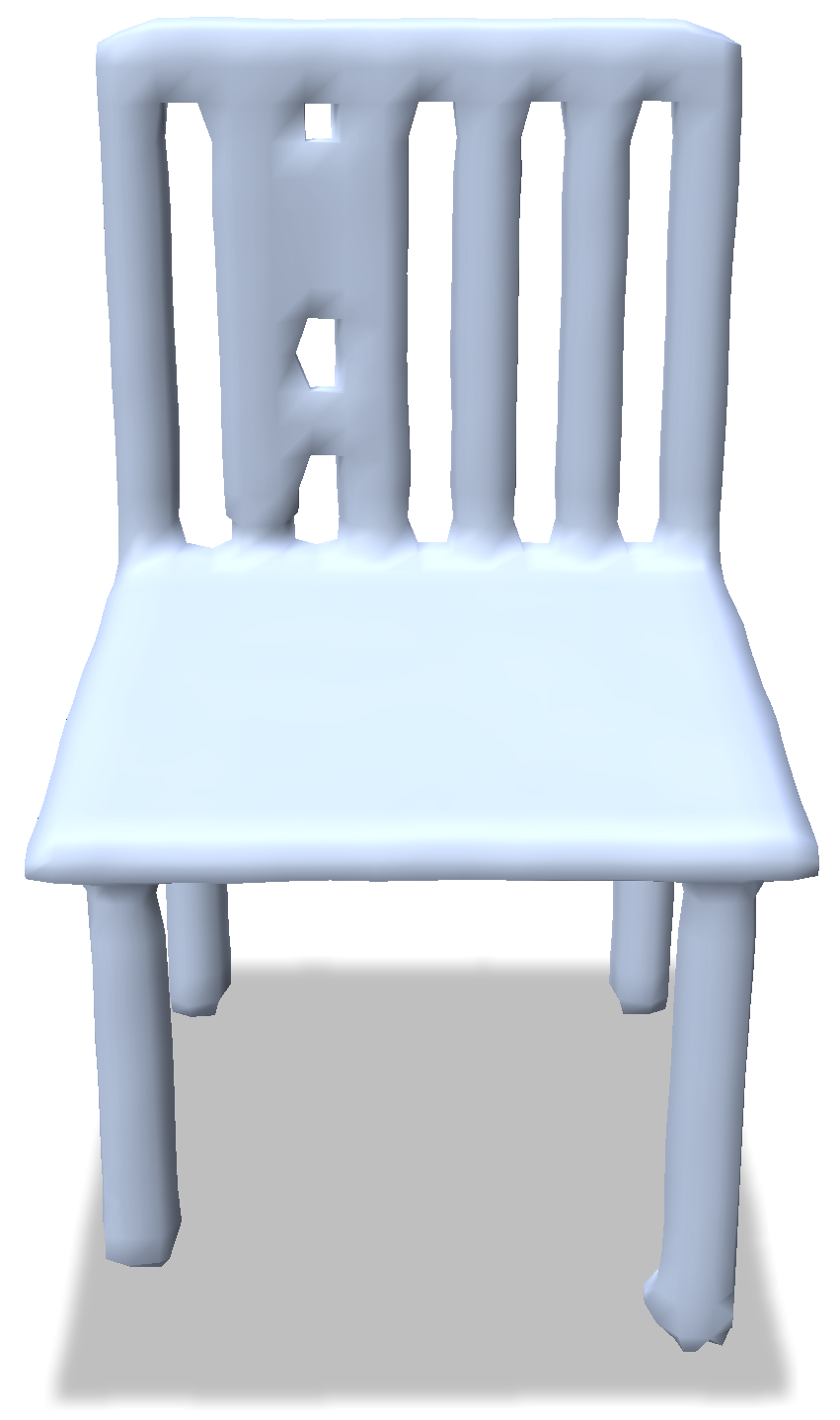,width=0.305\linewidth}}{(c) Predicted}
			\caption{Visual results of 3D implicit surface reconstruction on three input point clouds.}
			\label{res3}
		\end{figure}

		\textbf{Network and Training Detail, Run-time }: Our architecture consists of two sub-networks: Basis Prediction Network (BPN) and Coefficient Prediction Network (CPN). It is essential for the CPN to be permutation invariant since there exists a correspondence between the basis and its coefficient. We ensure this through the use of shared MLPs. CPN takes kernel points as input and passes them through a shared-MLP (6 layers with 3, 64,129, 256, 512, 1024 neurons, resp.) with a max-pooling after the last layer to generate 1024 global latent which is then concatenated with each of the kernel points and then passed through two shared MLPs (5 layers with  1024+3, 512, 512, 512, 512 neurons, resp.) and (5 layers with 1027+512, 512, 512, 512, 3 neurons, resp.) with a skip connection between the two to generate the beta values. The model contains 6670122 trainable parameters. In essence, the basis kernel points provide a fixed foundation, and the coefficients control the influence of each basis function. The Voronoi cells of kernel points are used as their local support for the basis functions while training. We use ADAM with learning rate = 1e-8, weight\_decay = 1e-7 along with OneCycleLR scheduler with max\_lr=1e-3. We train the proposed network on a 48GB RAM
		GPU on a Linux machine and have used PyTorch (Paszke
		et al. 2019). The average time for a forward pass on test point clouds having 10,000 points is around 0.0139 Seconds. 
		\subsubsection{3D Shape Completion}
		To show the effectiveness of the proposed approach, we trained the proposed architecture for the 3D shape completion task on the MVP dataset \cite{pan2021robust,pan2021variational} that consists of eight object classes. To evaluate our approach, we find the Chamfer-L1 distance between the predicted 3D surface and the ground-truth complete point clouds. The average Chamfer distance for the proposed approach for all 8 classes is  $0.039$.
		\section{Conclusion}
		In this work, we have addressed the problem of reconstructing the implicit 3D surface of an object from its partial 3D point cloud. We have proposed a radial basis functions-based linear representation of the underlying implicit surface. We further showed that the existing radial basis-based representation for implicit surface fail in a learning-based framework as the SDF predicted by these methods does not satisfy the full rank matrix criterion on the set of query points required for finding the mean squared loss between the predicted and the ground-truth SDFs. We have proposed an RBF-based approach where we learn linear radial basis functions with compact local supports. With this representation, along with the proposed query point selection, we were able to learn the coefficients of the proposed RBF-based representation. The proposed parametrization of the implicit surface is linear in the coefficients of RBF and has a compact surface representation. We achieved the best performance with respect to the Chamfer distance metric and comparable performance with the F-score metric.  
		\\\textbf{Limitations and Future Scope:} Our approach mostly supports single object-based tasks as it was trained on the ShapeNet dataset, and also it requires supervision from the ground-truth signed distance field. In future work, we would like to extend the proposed framework for scene-level tasks and solve it in semi-supervised or unsupervised approaches.
		\section*{Acknowledgments}
		This research was supported by the start-up research grant (SRG), SERB, Government of India and TIH iHub Drishti, IIT Jodhpur.

		\bibliography{aaai24}

\end{document}